\newcommand{\bigO}[1]{\mathcal{O}(#1)}
\newcommand{\mc}[1]{\mathcal{#1}}
\newcommand{\vts}[1]{\lvert #1 \rvert}
\newcommand{\Vts}[1]{\lVert #1 \rVert}
\newcommand{\bb}[1]{\mathbb{#1}}
\newcommand\inv[1]{#1\raisebox{1.05ex}{$\scriptscriptstyle-\!1$}}
\newcommand\Tstrut{\rule{0pt}{2.6ex}}         
\newcommand\Bstrut{\rule[-1.3ex]{0pt}{0pt}}   
\newcommand\Bstrutfrac{\rule[-0.7ex]{0pt}{0pt}}   
\newcommand\Tstrutfrac{\rule{0pt}{2.1ex}}         
\newcommand{\pc}{\zeta_c(t)}
\newcommand{\pd}{\zeta_d(t)}
\newcommand{\pe}{\zeta_e(t)}
\newcommand\footnoteref[1]{\protected@xdef\@thefnmark{\ref{#1}}\@footnotemark}
\newcommand{\shorteq}{%
  \settowidth{\@tempdima}{-}
  \resizebox{\@tempdima}{\height}{=}%
}
\DeclarePairedDelimiter\abs{\lvert}{\rvert}
\DeclareMathOperator*{\argmax}{arg\,max}
\DeclareMathOperator*{\argmin}{arg\,min}
\theoremstyle{plain}
\newtheorem{theorem}{Theorem}[section]
\newtheorem{definition}{Definition}[section]
\newtheorem{conc}{Conclusion}[]
\newtheorem{problem}{Problem}[section]
\DeclareMathOperator{\EX}{\mathbb{E}}
\title{StylePredict: Machine Theory of Mind for Human Driver Behavior From Trajectories}
\author[]{Rohan Chandra$^1$}
\author[]{Aniket Bera} 
\author[]{Dinesh Manocha}
\affil[]{University of Maryland, College Park}
\keywords{Autonomous Driving $|$ Driving Behavior $|$ Spectral Graph Theory $|$ Intelligent Transportation}
\begin{abstract}
Studies have shown that autonomous vehicles (AVs) behave conservatively in a traffic environment composed of human drivers and do not adapt to local conditions and socio-cultural norms. It is known that socially aware AVs can be designed if there exist a mechanism to understand the behaviors of human drivers. We present a notion of Machine Theory of Mind (M-ToM) to infer the behaviors of human drivers by observing the trajectory of their vehicles. Our M-ToM approach, called \textit{StylePredict}, is based on trajectory analysis of vehicles, which has been investigated in robotics and computer vision. StylePredict mimics human ToM to infer driver behaviors, or styles, using a computational mapping between the extracted trajectory of a vehicle in traffic and the driver behaviors using graph-theoretic techniques, including spectral analysis and centrality functions. We use StylePredict to analyze driver behavior in different cultures in the USA, China, India, and Singapore, based on traffic density, heterogeneity, and conformity to traffic rules and observe an inverse correlation between longitudinal (overspeeding) and lateral (overtaking, lane-changes) driving styles. 

\end{abstract}
\begin{document}

\maketitle
\ifthenelse{\boolean{shortarticle}}{\ifthenelse{\boolean{singlecolumn}}{\abscontentformatted}{\abscontent}}{}

\section{Introduction}
\label{}

\dropcap{T}here is considerable and growing interest in developing Theory of Mind~\cite{theoryofmind, cuzzolin2020knowing} in autonomous driving. From studies in traffic psychology~\cite{quartz-article, sa1,sa2,sa3,schwarting2019social}, Theory of Mind (ToM) in autonomous driving corresponds to interpreting driving behavior, or ``styles'', and adjusting to the local driving conditions. The local driving patterns of a particular region vary widely according to the traffic density, the type of vehicles, observance of traffic rules, and communication between drivers based on gestures or honking. Examples of ToM in driving can be observed in the following common case scenarios:

\begin{itemize}
    \item Merging or lane-changing: Suppose driver A wants to merge or switch over to another lane containing driver B. Driver A needs to make an intuitive split-second decision on whether or not driver B will allow the lane-change, or deny the lane-change by accelerating. If B offers the lane-change, the ideal reaction of A is to switch as soon as possible to avoid confusing B. If B denies the lane-change, then A should ideally keep driving and try again.
    
    \item Turning at intersections: At a four-way intersection without traffic lights,  multiple vehicles  arrive at the intersection at the same time. Assume, additionally, that the standard rules of intersections do not hold (right-hand-move-first). A driver, using ToM, would predict the behaviors of the other vehicles and would allow aggressive drivers to pass first.
\end{itemize}

\noindent AVs currently do not posses ToM and therefore exhibit conservative and shortsighted behavior, that frustrates other human drivers~\cite{dirtyTesla-gamma}. For example in~\cite{dirtyTesla-gamma}, a Tesla driver is observed to be executing a lane-change maneuver. The Tesla AutoPilot slows down to wait for an excessively large gap in the target lane thereby blocking the traffic behind it in the current lane. This behavior of Tesla Autopilot frustrates the other human drivers. 

Theory of Mind~\cite{theoryofmind} is a concept in cognitive and social neuroscience and psychology that refers to humans' ability to understand the intentions, desires, and behaviors of other people, without explicit communication. From the neuroscientific perspective, ToM has been explored in applications involving both human and animal subjects primarily to study the relationship between ToM and mental illnesses such as autism and schizophrenia~\cite{frith1994autism}, and demonstrate the existence of ToM in children and aged humans~\cite{wellman1992child}. ToM for human drivers, on the other hand, has not been formally studied, despite a general understanding and acceptance of the need for the former~\cite{quartz-article}.

As human drivers, we possess Theory of Mind, which allows us to infer the behaviors of other human drivers~\cite{quartz-article, cuzzolin2020knowing}, simply by observing the motion of their vehicles. For example, if a vehicle is weaving through traffic, other drivers typically infer that the driver of the weaving vehicle is aggressive, and accordingly keep their distance from that vehicle. ToM thus establishes a \textit{``psychological mapping''} between the trajectory of a vehicle and the behavior of its driver. Problems related to trajectory analysis, for example trajectory prediction~\cite{chandra2019traphic, chandra2019robusttp, chandra2020forecasting} and tracking~\cite{chandra2019densepeds, chandra2019roadtrack}, have been studied mostly in robotics and computer vision, whereas driver behavior modeling has mostly been restricted to traffic psychology and the social sciences~\cite{taubman2004multidimensional, gulian1989dimensions, french1993decision, deffenbacher1994development, ishibashi2007indices,ernestref2,ernestref3,ernestref4,ernestref8,ernestref9,ernestref10,ernestref11,ernestref12,ernestref12,ernestref13,ernestref14,ernestref15,ernestref16}. So far, there is relatively less work to connect the ideas from these disparate fields of research. For AVs to replicate ToM in the same manner as a human driver, new areas of research must bridge the gap between robotics, computer vision, and the social sciences and develop a computational ToM model that maps trajectories to human driver behavior.

Computational models for ToM are also referred to as ``Machine ToM''~\cite{rabinowitz2018machine}, abbreviated as M-ToM. The main strategy of current M-ToM methods in autonomous driving is to learn reward functions for human behavior using inverse reinforcement learning (IRL)~\cite{rabinowitz2018machine, jara2019theory, schwarting2019social, gt2}. M-ToM models that employ IRL, however, have certain major limitations. IRL, as a data-driven approach, requires large amounts of training data, and the learned reward functions are unrealistically tailored towards scenarios only observed in the training data~\cite{rabinowitz2018machine, gt2}. For instance, the approach proposed in~\cite{rabinowitz2018machine} requires $32$ million data samples for optimum performance. Alternatively, Bayesian approaches to IRL are sensitive to noise in the trajectory data~\cite{schwarting2019social, gt2}. Consequently, current M-ToM methods are restricted to simple and sparse traffic conditions.

In light of these limitations, perhaps the main challenge to M-ToM in autonomous driving is to universally model human driver behavior in all kinds of traffic environments. The difficulty stems from the general observation that different traffic environments lead to different driver behaviors because of varying traffic density and heterogeneity. For example  higher traffic density (e.g. India \& China) logically implies smaller spaces in which vehicles can navigate, which inadvertently affects longitudinal driving styles. \textit{Longitudinal} driving styles include any style that is executed along the axis of the road such as overspeeding and underspeeding~\cite{ernestref14, dillen2020keep}. The heterogeneity of traffic, on the other hand, affects \textit{lateral} driving styles, which are executed perpendicular to the axis of the road such as sudden lane-changing, overtaking, and weaving~\cite{dillen2020keep, sagberg2015review}. Two-wheelers and three-wheelers are known to be more likely to perform lateral driving styles~\cite{india-lane}. However, the exact relationship between the nature of the traffic environment and its effect on driver behavior has not been made clear. To model driver behavior in any given region, M-ToM techniques must take into account how driver behaviors depend on the traffic density and heterogeneity of a given region.

\subsection{Main Contributions}

\begin{enumerate}
    \item We present a graph-theoretic Machine Theory of Mind (M-ToM), called \textit{StylePredict}, that computes a \textit{computational mapping} between vehicle trajectories and driver behavior, mimicing the Theory of Mind that humans use to intuitively infer driver behavior by simply observing the vehicle trajectories. 
    
    \item We use graph-theoretic techniques including vertex centrality functions~\cite{rodrigues2019network} and spectral analysis to measure the likelihood and intensity of the driving styles (overspeeding, overtaking, sudden lane-changes, etc). Based on these measures, we assign global behaviors that identify whether a particular driver is aggressive or conservative.
    
    \item We use StylePredict to analyze driver behaviors in different regions of the world and identify a relationship between the nature of the traffic environment and its effect on driver behavior. In particular, we observe an inverse correlation between longitudinal and lateral driving styles of a region. Drivers in regions with \textit{higher} traffic density and heterogeneity (India/China) are more likely to maneuver \textit{laterally} whereas drivers in regions with \textit{lower} traffic density and heterogeneity (U.S.A/Singapore) are more likely to perform \textit{longitudinal} styles.

\end{enumerate}

The main advantages of StylePredict over prior work are the following:

\begin{itemize}
    \item StylePredict is robust and directly operates on raw trajectory data obtained from commodity location sensors such as GPS and stereo-cameras, without any filtering or pre-processing steps. We analyze the robustness property in Section~\ref{sec: approach}\ref{subsec: noise_invariance}.
    
    \item StylePredict generalizes to traffic observed in different geographies and cultures since the traffic information of any region or culture can be reduced to a common graph-based representation (Section~\ref{subsec: DGG}). In this work, we study the behaviors of drivers in Pittsburgh (U.S.A)~\cite{Argoverse}, New Delhi (India)~\cite{chandra2019traphic}, Beijing (China)~\cite{wang2019apolloscape}, and Singapore (private dataset). 
    
    \item StylePredict can be automatically integrated in any realtime autonomous driving system without the need for manual adjustments or parameter-tuning.
\end{itemize}

We demonstrate StylePredict using a state-of-the-art traffic simulator~\cite{leurent2019social} and evaluate its accuracy on real-world traffic videos using the Time Deviation Error (TDE)~\cite{cmetric}. Our results show that StylePredict can model different driver behaviors with near-human accuracy with an average TDE of less than $1$ second.

\begin{table}
\centering
\caption{A list showing the taxonomy of various aggressive and conservative behaviors. In this work, we focus on modeling the longitudinal and Lateral specific styles (except ``tailgating'' and `` responding to pressure''). NA denotes ``Not Applicable''}
\resizebox{\columnwidth}{!}{
\begin{tabular}{lcc}
\toprule
Global Behavior & Specific Style/Indicator & Nature   \\
\midrule
\multirow{7}{*}{Aggressive} & Overspeeding  & Longitudinal\\
&  Tailgating & Longitudinal\\
& Overtake as much as possible & Lateral\\
 & Weaving & Lateral \\
& Sudden Lane-Changes & Lateral\\
& Inappropriate use of horn & NA\\
 &  Flashing lights at vehicle in front & NA\\
\midrule
\multirow{3}{*}{Conservative} & Uniform speed or under-speeding & Longitudinal \\
 & Conforms to single lane & Lateral \\
 & Responding to pressure from other drivers & NA \\

\bottomrule
\end{tabular}
}
\label{tab: list_of_behaviors}
\end{table}

\subsection{Interpretation of Driver Behavior in Social Science }
\label{subsec: driver_behavior}

Many studies have attempted to define driver behavior for traffic-agents~\cite{def1,def2,def3,def4,def5,def6}. 
However, due to its abstract nature, driver behavior does not lend itself to a formal definition. For example in~\cite{def1,def2}, driver behavior is described as driving habits that are established over a period of time, while Ishibashi et al.~\cite{def3} define driver behavior as ``an attitude, orientation and way of thinking for driving''. To resolve these inconsistencies, Sagberg et al.~\cite{sagberg2015review} extract and summarize the common elements from these definitions and propose a unified definition for driver behavior. We incorporate this definition in our driver behavior model.

\begin{definition}
\textbf{(Sagberg et al.~\cite{sagberg2015review}} Driver behavior refers to the high-level \textit{``global behavior''}, such as aggressive or conservative driving. Each global behavior can be expressed as a combination of one or more underlying \textit{``specific styles''}. For example, an aggressive driver (global behavior) may frequently overspeed or overtake (specific styles).
\label{def: behavior}
\end{definition}

\noindent The main benefit of Sagberg's definition is that it allows for a formal taxonomy for driver behavior classification. Specific indicators can be classified as either \textit{longitudinal} styles (along the axis of the road) or \textit{lateral} (perpendicular to the axis of the road). Any combination of these indicators can define a global behavior such as aggressive or conservative driving. We list several longitudinal and lateral specific styles corresponding to the aggressive and conservative global behaviors in Table~\ref{tab: list_of_behaviors}.

\subsection{Problem Setup}

We can formally characterize driver behavior by mathematically modeling the underlying specific styles.

\begin{problem}
In a traffic video with $N$ vehicles during any time-period $\Delta t$, given the trajectories of all vehicles, our objective is to \ul{mathematically model} the \textit{specific styles} for all drivers during $\Delta t$.
\label{problem: prob}
\end{problem}

\noindent In Section~\ref{sec: approach}, we will elucidate what it means to ``mathematically model'' a specific style. In Section~\ref{subsec: DGG}, we construct the ``traffic-graph'' data structure used by our approach. Then we introduce the ideas of vertex centrality in Section~\ref{sec: centrality} followed by a presentation of our main approach in Section~\ref{sec: approach}. We describe the experiments and results in Section~\ref{sec: experiments_and_results}. We analyze the traffic in different cultures and list some observations in Section~\ref{sec: culture_analysis} and conclude the paper in Section~\ref{sec: conclusion}.

\begin{figure*}
\centering
\includegraphics[width=\linewidth]{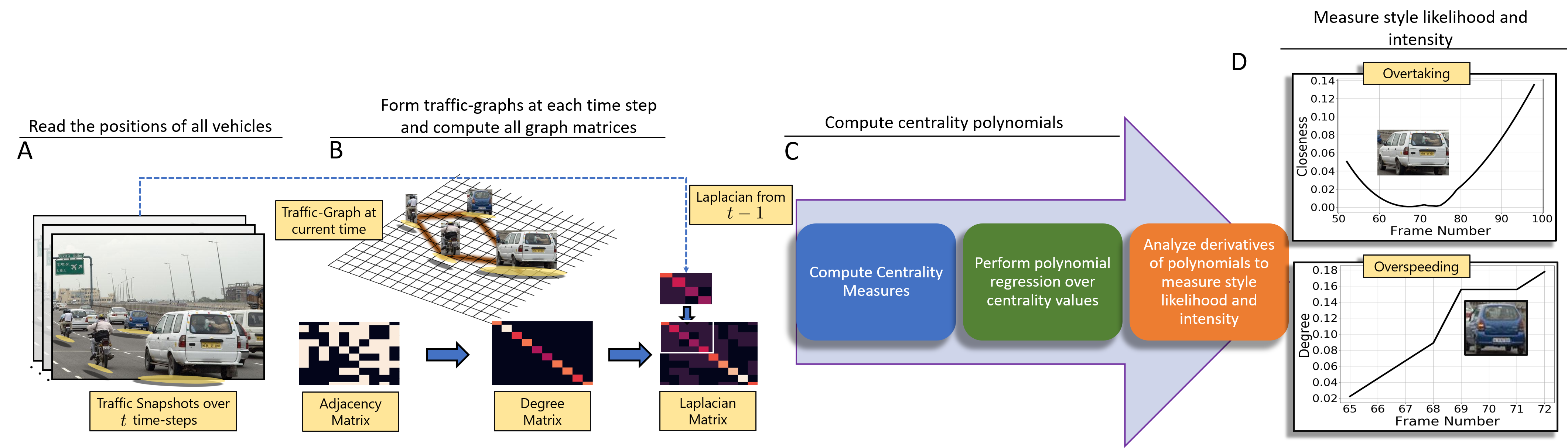}
\caption{\textbf{Overview:} The autonomous vehicle reads the positions of all vehicles in realtime. The positions and corresponding spatial distances between vehicles are represented through a traffic-graph $\mathcal{G}_t$ (Section~\ref{subsec: DGG}). We use the centrality functions defined in Section~\ref{sec: centrality} to model the specific driving style corresponding to the global behaviors as outlined in Table~\ref{tab: behaviors_centrality}.}
\label{fig: overview}
\end{figure*}

\section{Traffic-Graph Setup}

\label{subsec: DGG}

The behavior of a driver depends on his or her interactions with nearby drivers. StylePredict models the relative interactions between drivers by representing traffic through weighted undirected graphs, called ``traffic-graphs''. In this section, we describe the construction of these graph representations. If we assume that the trajectories of all the vehicles in the video are extracted using state-of-the-art localization methods~\cite{bresson2017simultaneous} and are provided to our algorithm as an input, then the traffic-graph, $\mc{G}_t$, at each time-step $t$ can be defined as follows,

\begin{definition}
A ``traffic-graph'', $\mc{G}_t$, is a dynamic, undirected, and weighted graph with a set of vertices $\mathcal{V}(t)$ and a set of edges $\mc{E}(t) \subseteq \mc{V}(t) \times \mc{V}(t)$ as functions of time defined in the 2-D Euclidean metric space with metric function $f(x,y) = \Vts{x-y}^2$. Two vertices $v_i, v_j \in \mc{V}$ are connected if and only if $f(v_i,v_j) < \mu$ for some constant $\mu$.
\end{definition}

We represent traffic at each time instance $t$ with $N$ road-agents using a traffic-graph $\mc{G}_t$. Each vertex in the graph $\mc{G}_t$ is represented by the vehicle position in the global coordinate frame, \textit{i.e.} $v_i \gets [x_i,y_i]^\top \in \mathbb{R}^2$. The spatial distance between two vehicles is assigned as the cost of the edge connecting the two vehicles.

In computational graph theory, every graph $\mc{G}_t$ can be equivalently represented by an adjacency matrix, $A_t$. For a particular traffic-graph, $\mc{G}_t$, the adjacency matrix, $A \in \mathbb{R}^{N \times N}$ is given by,

\begin{equation}
A(i,j)=
     \begin{cases}
      f(v_i,v_j) & \text{if $f(v_i,v_j) < \mu,i \neq j$ },\\
      0 &\text{otherwise.}
     \end{cases}
     \label{eq: similarity_function}
\end{equation}
\noindent where $\mu$ is a distance threshold parameter. Adjacency matrices allow linear vector operations to be performed on graph structures, which are useful for analyzing individual vertices. For example, from Equation~\ref{eq: similarity_function}, it can be observed that each non-zero entry in the $j^\textrm{th}$ column corresponding to the $i^\textrm{th}$ row of the adjacency matrix stores the relative distance between the $i^\textrm{th}$ and $j^\textrm{th}$ vehicles. 



However, considering the traffic-graph and its corresponding adjacency matrix only at a current time-step $t$ is not useful in describing the behavior of a driver. The behavior of a driver also depends on their actions from previous time-steps~\cite{gt2}. To accommodate this notion, we enforce the adjacency matrix at the current time-step to be correlated with the adjacency matrices for all previous time-steps. If the adjacency matrix at a time instance $t$ is denoted as $A_t$, then, the adjacency matrix for the next time-step, $A_{t+1}$, is obtained by the following update,

\begin{equation}
A_{t+1} =
\left[
\begin{array}{c|c}
A_{t} \Bstrut & 0 \Bstrut\\
\hline
0 \Tstrut & 1
\end{array}
\right] + \delta\delta^\top,
\label{eq: A_update}
\end{equation}

\noindent where $\delta \in \bb{R}^{(t+1) \times 2} $ is a sparse update matrix. The presence of a non-zero value in the $j^\textrm{th}$ row of $\delta$ indicates that the $j^\textrm{th}$ road-agent has formed an edge connection with a new vehicle, that has been added to the current traffic-graph. The update rule in Equation~\ref{eq: A_update} ensures that a vehicle adds edge connections to new vehicles while retaining edge connections with previously seen vehicles. A candidate vehicle is categorized as ``new'' with respect to an ego-vehicle if there does not exist any prior edge connection between the vehicles \textit{and} the speed of the ego-vehicle is greater than the candidate vehicle. If an edge connection already exists between an ego-vehicle and the candidate vehicle, then the candidate vehicle is said to have been ``observed'' or ``seen''. The size of $A_t$ is fixed for all time-instants, $t$, and is initialized as a zero matrix of size $N$x$N$, where $N$ is the max number of agents. $A_t$ is updated in-place with time and is reset to a zero matrix once the number of vehicles crosses some fixed $N$.

\section{Centrality Measures}
\label{sec: centrality}
\begin{table*}[t]
\centering
\caption{Definition and categorization of driving behaviors~\cite{sagberg2015review}. We measure the likelihood and intensity of specific styles by analyzing the first-and second-order derivatives of the centrality polynomials.}
\centering
\resizebox{\linewidth}{!}{
\begin{tabular}{lcccc}

\toprule

Global Behaviors & Specific Styles & Centrality & Style Likelihood Estimate & Style Intensity Estimate\\
\midrule
 \multirow{5}{*}{Aggressive}& Overspeeding & Degree ($\zeta_d$) & Magnitude of $1^\textrm{st}$ Derivative & Magnitude of $2^\textrm{nd}$ Derivative\\
 & Overtaking / Sudden Lane-Change & Closeness ($\zeta_c$) & Magnitude of $1^\textrm{st}$ Derivative  &   Magnitude of $2^\textrm{nd}$ Derivative\\
  & Weaving  & Closeness ($\zeta_c$) & Local Extreme Points & $\varepsilon$-sharpness of Local Extreme Points \\
 \midrule

\multirow{2}{*}{Conservative}& Driving Slowly or uniformly  & Degree ($\zeta_d$)& Magnitude of $1^\textrm{st}$ Derivative& Magnitude of $2^\textrm{nd}$ Derivative\\
 & No Lane-change  & Closeness ($\zeta_c$)& Magnitude of $1^\textrm{st}$ Derivative& Magnitude of $2^\textrm{nd}$ Derivative\\

\bottomrule
\end{tabular}
}
\label{tab: behaviors_centrality}
\end{table*}

 In graph theory and network analysis, centrality measures are real-valued functions $\zeta:\mathcal{V}\longrightarrow \mathbb{R}$, where $\mathcal{V}$ denotes the set of vertices and $\mathbb{R}$ denotes a scalar real number that identifies key vertices within a graph network. So far, centrality functions have been restricted to identifying influential personalities in online social media networks~\cite{cen-socialmedia} and key infrastructure nodes in the Internet~\cite{cen-internet}, to rank web-pages in search engines~\cite{cen-pagerank}, and to discover the origin of epidemics~\cite{cen-epidemic}. In this work, we show that centrality functions can measure the likelihood and intensity of different driver styles such as overspeeding, overtaking, sudden lane-changes, and weaving.
 
There are several types of centrality functions. The ones that are of particular importance to us are the degree and closeness centrality~\cite{rodrigues2019network}. Each function measures a different property of a vertex. Typically, the choice of selecting a centrality function depends on the current application at hand. For instance, in the examples mentioned earlier, the eigenvector centrality is used in search engines while the degree centrality is appropriate for measuring the popularity of an individual in social networks. Likewise, we found that different centrality functions measure the likelihood and intensity of specific driving styles such as overspeeding, overtaking, sudden lane-changes, and weaving. Here, we give the basic definitions of the key centrality functions used in our approach.



\begin{definition}
\textbf{Closeness Centrality:} In a connected traffic-graph at time $t$ with adjacency matrix $A_t$, let $\mc{D}_t(v_i,v_j)$ denote the minimum total edge cost to travel from vertex $i$ to vertex $j$, then the discrete closeness centrality measure for the $i^\textrm{th}$ vehicle at time $t$ is defined as,
\begin{equation}
    \zeta^i_c[t] = \frac{N-1}{\sum_{v_j\in \mathcal{V}(t)\setminus \{v_i\}} \mc{D}_t(v_i,v_j)},
    \label{eq: closeness}
\end{equation}
\end{definition}

The closeness centrality for a given vertex computes the reciprocal of the sum of the edge values of the shortest paths between the given vertex and all other vertices in the connected traffic-graph. By definition, the higher the closeness centrality value, the more centrally the vertex is located in the graph. 

\begin{definition}
\textbf{Degree Centrality:} In a connected traffic-graph at time $t$ with adjacency matrix $A_t$, let $\mc{N}_i(t) = \{ v_j \in \mc{V}(t), \ A_t(i,j) \neq 0, \nu_j \leq \nu_i\}$ denote the set of vehicles in the neighborhood of the $i^\textrm{th}$ vehicle with radius $\mu$, then the discrete degree centrality function of the $i^\textrm{th}$ vehicle at time $t$ is defined as,
\begin{equation}
    \begin{aligned}
    \zeta^i_d[t] = \bigl | \{ v_j \in \mc{N}_i(t) \} \bigr | + \zeta^i_d[t-1] &\\
    \textrm{such that} \ (v_i,v_j) \not\in \mc{E(\tau)}, \tau = 0, \ldots, t-1&
    \end{aligned}
    \label{eq: degree}
\end{equation}
where $\vts{\cdot}$ denotes the cardinality of a set and $\nu_i, \nu_j$ denote the velocities of the $i^\textrm{th}$ and $j^\textrm{th}$ vehicles, respectively.
\end{definition}

The degree centrality at any time-step $t$ computes the cumulative number of edges between the given vertex and connected vertices in the traffic-graph over the past $t$ seconds, given that the velocity of the given vertex is higher than that of the connected vertices. 

\begin{algorithm}[t]
    \SetKwInOut{Input}{Input}
    \SetKwInOut{Output}{Output}
\SetKwComment{Comment}{$\triangleright$\ }{}
\SetAlgoLined
\Input{$u = v_i \gets [x_i,y_i]^\top \ \forall v_i \in \mathcal{V}(t)$}
\Output{$\textrm{SLE}(t), \textrm{SIE}(t)$}
$t=0$\\
\For {each $v \in \mathcal{V}(t)$}{
\While{$t \leq T$}{ 
// Compute Centrality //\\
$     \zeta^i_c[t] = \frac{N-1}{\Tstrutfrac \sum_{v_j\in \mathcal{V}(t)\setminus \{v_i\}} \mc{D}_t(v_i,v_j)}$ \\ 
$    \zeta^i_d[t] = \bigl | \{ v_j \in \mc{N}_i(t) \} \bigr | + \zeta^i_d[t-1], (v_i,v_j) \not\in \mc{E(\tau)}, \tau = 0, \ldots, t-1$ \\
$t \gets t+1$ \\
// Perform Polynomial Regression w.r.t Time //\\
$\beta = \argmin_\beta \Vts{\zeta - M\beta}$\\
$\zeta(t) = \beta_0 + \beta_1 t + \beta_2t^2$\\
// Compute Likelihood and Intensity //\\
\For{$k=0,1,2$}{
$\textrm{SLE}_k(t) = \abs*{\frac{\partial \zeta(t)}{\partial t}}$\\
$\textrm{SIE}_k(t) = \abs*{\frac{\partial^2 \zeta(t)}{\partial t^2}}$
}
}
}
\caption{Our approach outputs the Style Likelihood Estimate (SLE) and Style Intensity Estimate (SIE) for a vehicle, $u$, in a given time-period $\Delta t$. }
\label{alg: main}
\end{algorithm}

\section{StylePredict: Mapping Trajectories to Behavior}
\label{sec: approach}

Here, we present the main algorithm, called \textit{StylePredict}, for solving Problem~\ref{problem: prob}. StylePredict maps vehicle trajectories to specific styles by computing the likelihood and intensity of the latter using the definitions of the centrality functions (Section~\ref{sec: centrality}). The specific styles are then used to assign global behaviors~\cite{sagberg2015review} according to Table~\ref{tab: behaviors_centrality}.

\paragraph{StylePredict Algorithm}

\begin{enumerate}
    \item Obtain the positions of all vehicles using localization sensors deployed on the autonomous vehicle and form traffic-graphs at each time-step (Section~\ref{subsec: DGG}).
    
    
    \item Compute the closeness and degree centrality function values for each vehicle at every time-step.
    
    \item Perform polynomial regression to generate uni-variate polynomials of the centralities as a function of time. 
    
    \item Measure likelihood and intensity of a specific style for each vehicle by analyzing the first- and second-order derivatives of their centrality polynomials.
    
\end{enumerate}

\noindent We depict the overall approach in Figure~\ref{fig: overview} and outline the pseudo-code in Algorithm~\ref{alg: main}. We begin by using the construction described in Section~\ref{subsec: DGG} to form the traffic-graphs for each frame and use the definitions in Section~\ref{sec: centrality} to compute the discrete-valued centrality measures (lines $5-7$). Since centrality measures are discrete functions, we perform polynomial regression using regularized Ordinary Least Squares (OLS) solvers to transform the three centrality functions into continuous polynomials, $\pc$, $\pd$, and $\pe$, as a function of time (lines $10-11$). We describe polynomial regression in detail in the following subsections. We compute the likelihood and intensity (lines $12-13$) of specific styles by analyzing the first- and second-order derivatives of $\pc$, $\pd$, and $\pe$, respectively (this step is discussed in further detail in Section~\ref{subsec: SLE_SIE}).



\subsection{Polynomial Regression}
\label{subsec: polynomial_regression}
 
In order to study the behavior of the centrality functions with respect to how they change with time, it is useful to convert the discrete-valued $\zeta[t]$ into continuous-valued polynomials $\zeta(t)$. This allows us to calculate the first- and second-order derivatives of the centrality functions necessary to measure the likelihood and intensity of driver behavior, explained in Section~\ref{sec: centrality}\ref{subsec: SLE_SIE}.

A quadratic\footnote{A polynomial with degree $2$.} centrality polynomial can be expressed as $\zeta(t) = \beta_0 + \beta_1 t + \beta_2t^2$, as a function of time. Here, $\beta = [\beta_0 \ \beta_1 \ \beta_2]^\top$ are the polynomial coefficients. These
 coefficients can be computed by solving the following ordinary least squares (OLS) problem:


\begin{equation}
    \begin{split}
        M \beta &= \zeta^i  \\
        (M^\top M) \beta &= M^\top \zeta^i  \\
        \beta &= \inv{(M^\top M)}M^\top \zeta^i
    \end{split}
    \label{eq: noiseless_OLS}
\end{equation}

\noindent Here, $M \in \mathbb{R}^{T\times(d+1)}$ is the Vandermonde matrix and is given by, 

\[
M = \begin{bmatrix}
    1 & t_1 & t^2_1 & \dots  & t^d_1 \\
    1 & t_2 & t^2_2 & \dots  & t^d_2 \\
    \vdots & \vdots & \vdots & \ddots & \vdots \\
    1 & t_T & t^2_T & \dots  & t^d_T
\end{bmatrix}
\]

\noindent where $d = 2 \ll T$ is the degree of the resulting centrality polynomial.

\begin{figure}[t]
\centering
\begin{subfigure}[h]{\linewidth}
  \includegraphics[width=\linewidth]{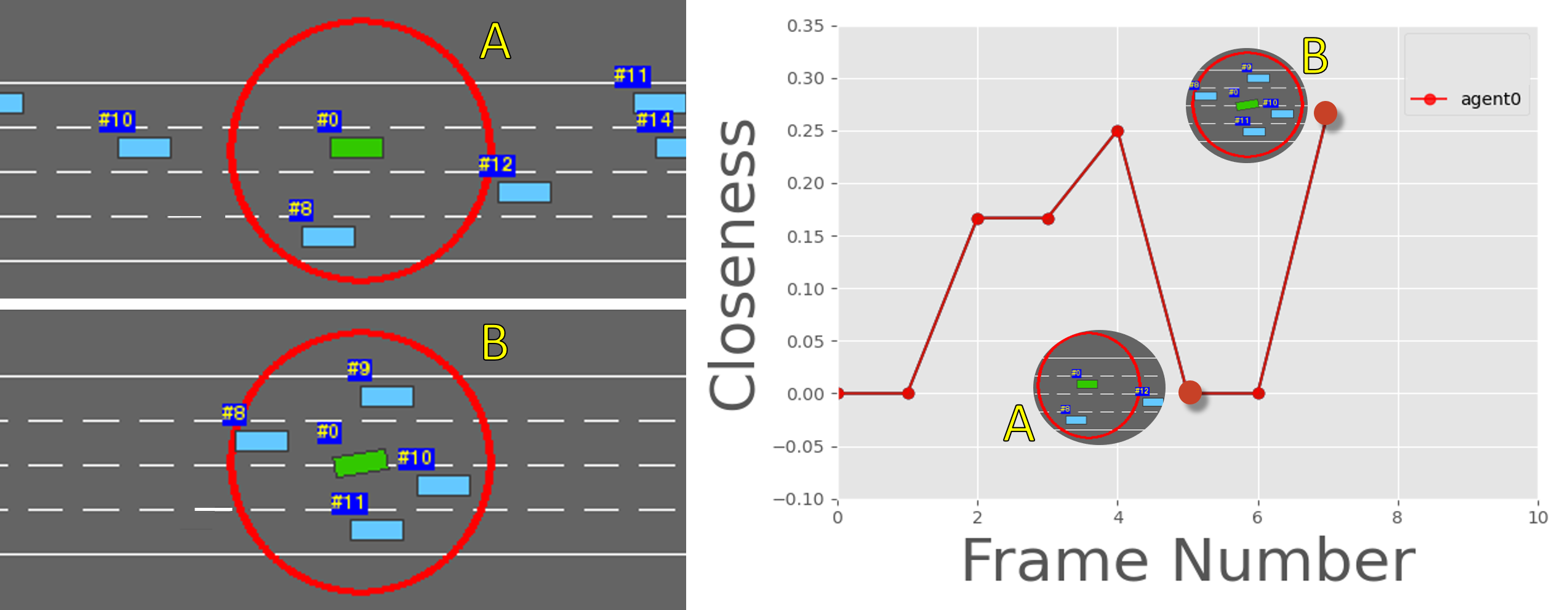}
\caption{\textbf{Sudden Lane Change/Weaving: }The closeness centrality for the green agent \#0 decreases away from the center (scenario A) and increases towards the center (scenario B).}
\label{fig: closeness_demo}
\end{subfigure}
\begin{subfigure}[h]{\linewidth}
  \includegraphics[width=\linewidth]{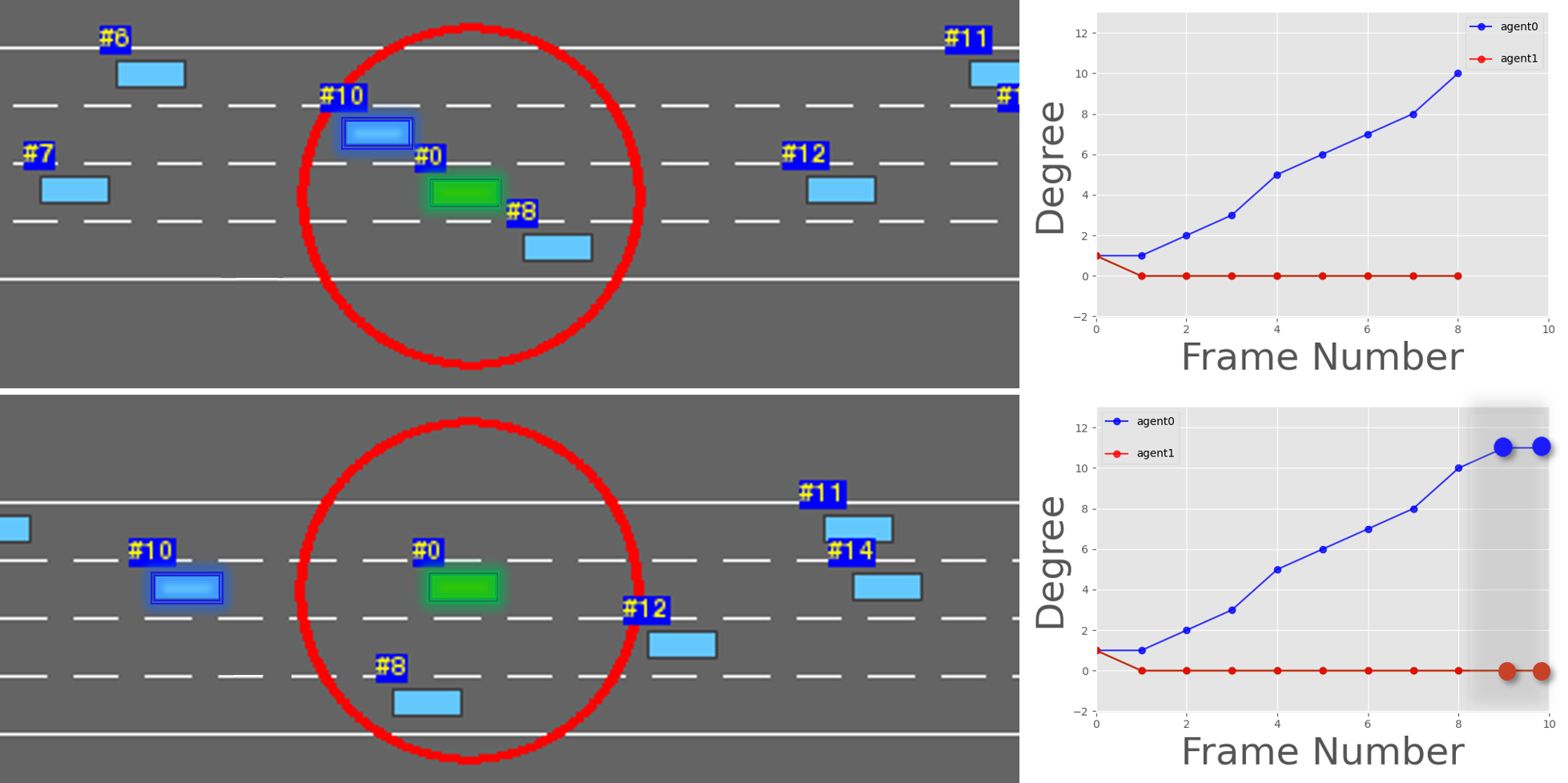}
\caption{\textbf{Overspeeding: }The degree centrality monotonically increases. The rate of increase indicates the scale of aggressiveness, with higher rates of change indicating more aggressively overspeeding drivers. The blue plot corresponds to agent \#0, while the red plot corresponds to agent \#10. Note that agent \#10 is being overtaken by agents that appeared earlier (\#0 and \#8) due to it's conservative speed. Consequently, it's degree centrality plot is constant at $0$.}
\label{fig: degree_demo}
\end{subfigure}
\caption{Simulation results of various driving styles (overspeeding, weaving, lane changing) with their corresponding centrality values. The red circle represents the radius for the agent \#0.}
\label{fig: simulator_figs}
\end{figure}

\begin{figure*}[t]
\centering
  \begin{subfigure}[h]{\textwidth}
    \includegraphics[width=\textwidth]{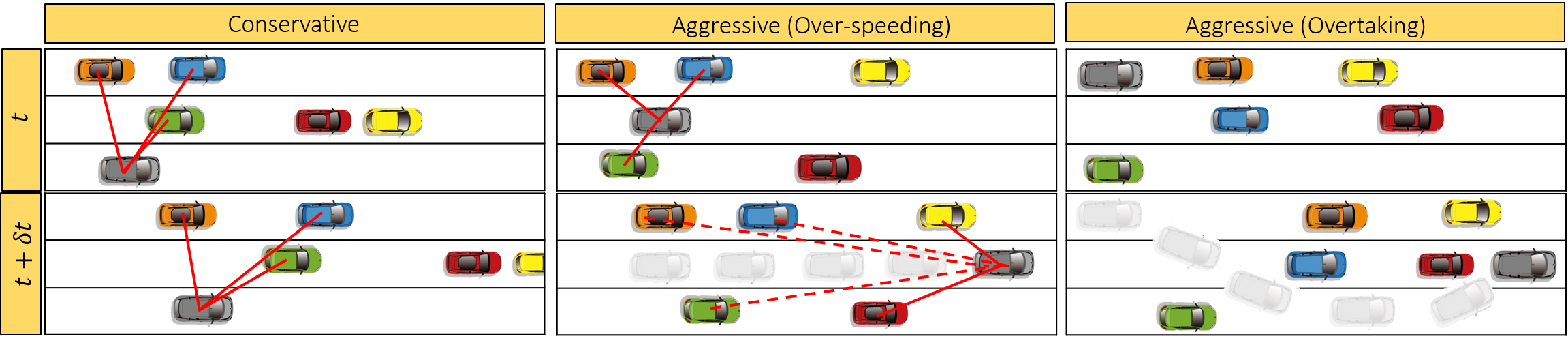}
    \caption{In all three scenarios, the ego-vehicle is a gray vehicle marked with a blue outline. (\textit{left}) A conservative vehicle, \textit{(middle)} an overspeeding vehicle in the same lane, and \textit{(right)} a weaving and overtaking vehicle. }
    \label{fig: part_a}
  \end{subfigure}
  \begin{subfigure}[h]{0.328\textwidth}
    \includegraphics[width=\textwidth]{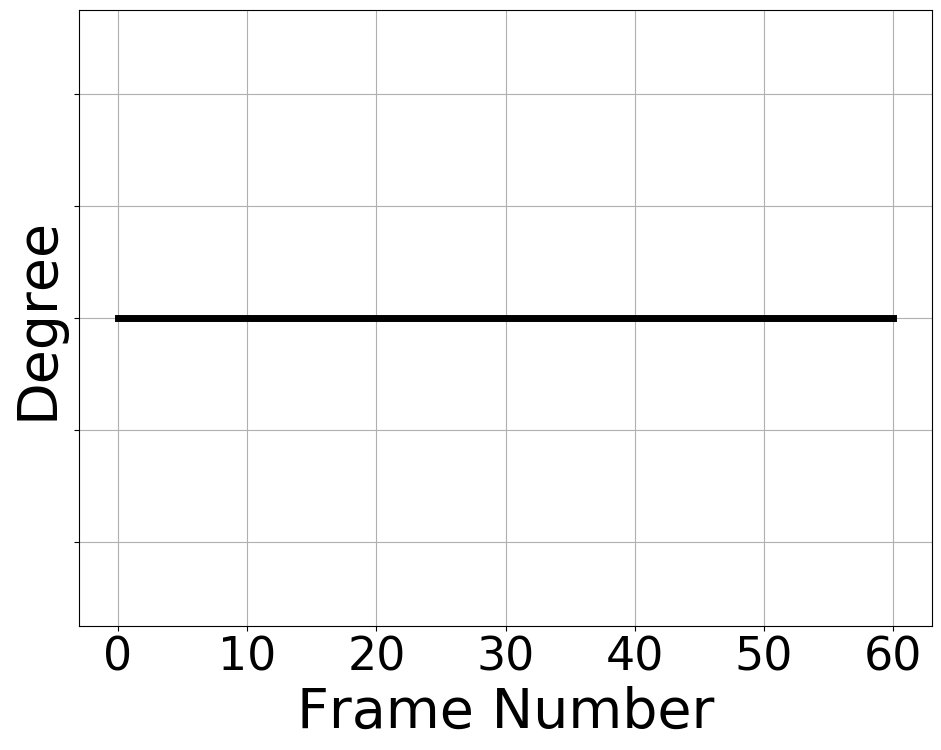}
    \caption{Constant degree centrality function for conservative vehicle.}
    \label{fig: part_b}
  \end{subfigure}
    \begin{subfigure}[h]{0.332\textwidth}
    \includegraphics[width=\textwidth]{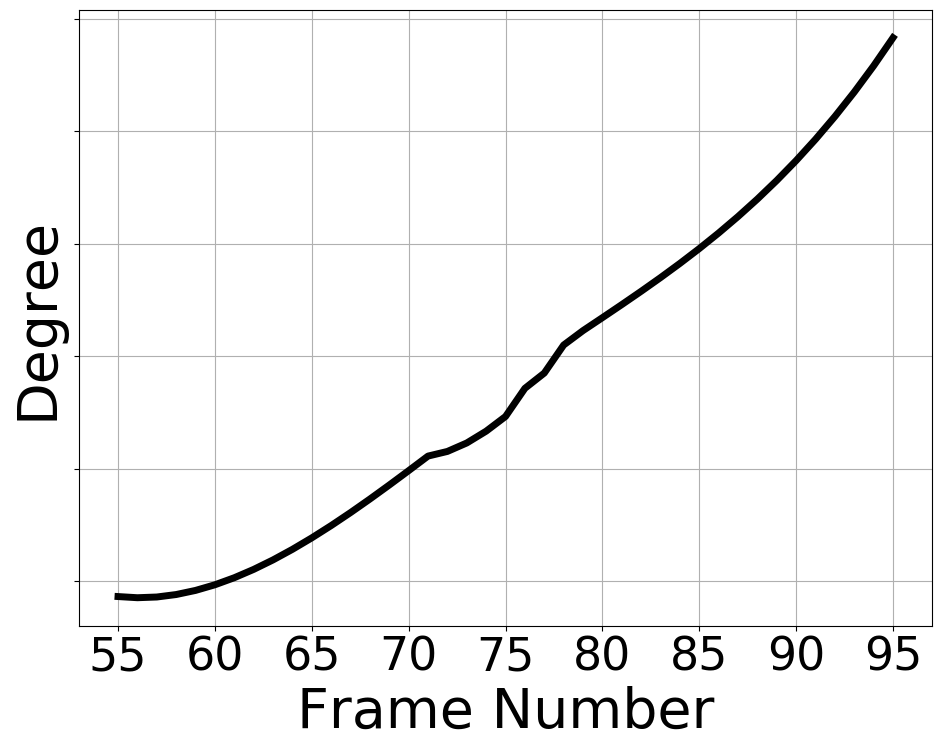}
    \caption{Monotonically increasing centrality function for overspeeding vehicle.}
    \label{fig: part_c}
  \end{subfigure}
  \begin{subfigure}[h]{0.324\textwidth}
    \includegraphics[width=\textwidth]{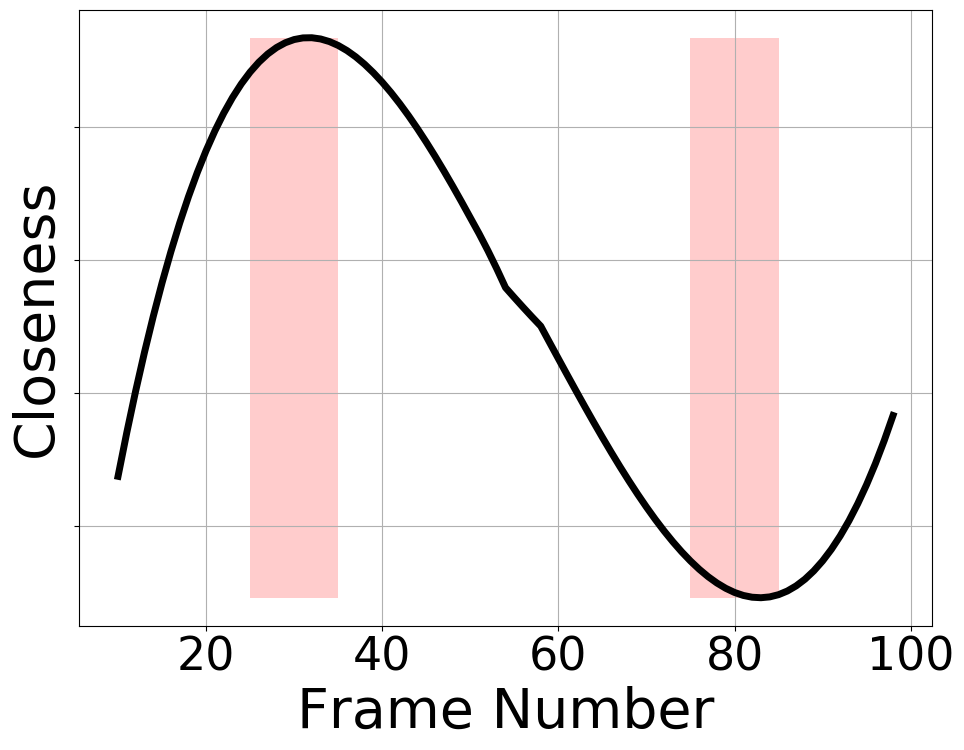}
    \caption{Extreme points for closeness centrality function for weaving vehicle.}
    \label{fig: part_d}
  \end{subfigure}
  \caption{\textbf{Measuring the Likelihood of Specific Styles:} We measure (degree and closeness centrality) the likelihood of the specific style of the ego-vehicle (grey with a blue outline) by computing the magnitude of the derivative of the centrality functions as well as the functions' extreme points. In Figure~\ref{fig: part_b}, the derivative of the degree centrality function is $0$ because the ego-vehicle does not observe any additional new neighbors (See Section~\ref{subsec: SLE_SIE}) so the degree centrality is a constant function; therefore the vehicle is conservative. In Figure~\ref{fig: part_c}, the vehicle overspeeds, and consequently, the rate of observing new neighbors is high, which is reflected in the magnitude of the derivative of the degree centrality being positive. Finally, in Figure~\ref{fig: part_d}, the ego-vehicle demonstrates overtaking/sudden lane-changes and weaves through traffic. This is reflected in the magnitude of the slope and the location of extreme points, respectively, of the closeness centrality function.}
  \label{fig: maneuver_behaviors}
\end{figure*}

\subsection{Robustness to Noise}
\label{subsec: noise_invariance}

In the formulation above, our algorithm assumes perfect sensor measurements of the global coordinates of all vehicles. However, in real-world systems, even state-of-the-art methods for vehicle localization incur some error in measurements. We consider the case when the raw sensor data is corrupted by some noise $\epsilon$. Without loss of generality, we prove robustness to noise for the degree centrality and the analysis can be extended to other centrality functions. The discrete-valued centrality vector for the $i^\textrm{th}$ agent is given by $\zeta^i \in \mathbb{R}^{T\times 1}$. So $\zeta^1[2]$ corresponds to the degree centrality value of the $1^\textrm{st}$ agent at $t=2$. 

In the previous section, we show that a noiseless estimator may be obtained by solving an ordinary least squares (OLS) system given by Equation~\ref{eq: noiseless_OLS}.  However, in the presence of noise $\epsilon$, the OLS system described in Equation~\ref{eq: noiseless_OLS} is modified as shown below:

\begin{equation}
    \begin{split}
        &M \tilde\beta = \tilde\zeta^i  \\
        &\tilde\beta = \inv{(M^\top M)}M^\top \tilde\zeta^i
    \end{split}
    \label{eq: noisy_OLS}
\end{equation}

\noindent where $\tilde \zeta^i = \zeta^i + \epsilon$. Then we can prove the following,

\begin{theorem}
$\Vts{\tilde \beta - \beta} =  \bigO{\epsilon}$.
\label{thm: noiseless}
\end{theorem}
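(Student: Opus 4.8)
The plan is to bound the error $\Vts{\tilde\beta - \beta}$ directly from the two OLS solution formulas. Subtracting Equation~\ref{eq: noisy_OLS} from Equation~\ref{eq: noiseless_OLS} and using $\tilde\zeta^i = \zeta^i + \epsilon$, the noiseless and noisy coefficient vectors differ by exactly
\begin{equation}
    \tilde\beta - \beta = \inv{(M^\top M)}M^\top(\tilde\zeta^i - \zeta^i) = \inv{(M^\top M)}M^\top \epsilon,
    \label{eq: beta_diff_plan}
\end{equation}
so the entire argument reduces to controlling the norm of the linear map $P := \inv{(M^\top M)}M^\top$ applied to $\epsilon$. First I would take norms on both sides of Equation~\ref{eq: beta_diff_plan} and apply submultiplicativity to get $\Vts{\tilde\beta - \beta} \leq \Vts{P}\,\Vts{\epsilon}$.

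Next I would argue that $\Vts{P}$ is a finite constant independent of $\epsilon$. The Vandermonde matrix $M \in \mathbb{R}^{T\times(d+1)}$ is fixed once the sampling times $t_1,\ldots,t_T$ and the degree $d=2$ are chosen; since $d = 2 \ll T$ and the $t_j$ are distinct, $M$ has full column rank, so $M^\top M$ is symmetric positive definite and invertible, and $P$ is exactly the Moore--Penrose pseudoinverse $M^\dagger$. Hence $\Vts{P} = 1/\sigma_{\min}(M)$, the reciprocal of the smallest singular value of $M$, which is a strictly positive constant determined entirely by the problem geometry. Folding this constant into the big-$\mathcal{O}$ notation gives $\Vts{\tilde\beta - \beta} \leq \Vts{P}\,\Vts{\epsilon} = \bigO{\epsilon}$, which is the claim. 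I would also remark that the same bound holds verbatim for the closeness centrality polynomial $\pc$ and the weaving polynomial $\pe$, since only the data vector $\zeta^i$ changes and not the design matrix $M$.

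The main obstacle is conceptual rather than computational: making precise what ``$\bigO{\epsilon}$'' means here. The bound is linear with a constant $1/\sigma_{\min}(M)$ that depends on $T$ and on the spacing of the sample times, so the statement is really ``for a fixed sampling scheme, the coefficient perturbation scales at most linearly in the noise magnitude.'' I would be careful to state that $\Vts{\epsilon}$ denotes, say, the Euclidean norm of the noise vector on the $T$ samples, and that the constant does not blow up because $M$ is a fixed well-conditioned tall matrix (regularization in the OLS solve, mentioned in Section~\ref{subsec: polynomial_regression}, only improves conditioning and can only shrink $\Vts{P}$). A secondary subtlety worth a sentence is that $\epsilon$ here is the noise in the \emph{centrality} values $\tilde\zeta^i$; if one instead starts from noise in the raw positions, one needs the (mild) observation that the degree centrality in Equation~\ref{eq: degree} is a bounded, piecewise-constant function of the positions, so bounded position noise induces bounded $\Vts{\epsilon}$, after which the linear-algebra argument above applies unchanged.
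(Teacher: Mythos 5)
Your algebraic skeleton is identical to the paper's: both start from the perturbation identity $\tilde\beta - \beta = \inv{(M^\top M)}M^\top \epsilon$ and reduce the claim to bounding the norm of the fixed linear map applied to $\epsilon$. For the literal asymptotic statement ($\epsilon \to 0$ with $M$ fixed), your bound $\Vts{\tilde\beta - \beta} \leq \Vts{M^\dagger}\,\Vts{\epsilon} = \sigma_{\min}(M)^{-1}\Vts{\epsilon}$ does deliver $\bigO{\epsilon}$, and your remarks about full column rank, the pseudoinverse, and the distinction between noise in positions versus noise in centrality values are all sound and more carefully stated than in the paper.

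However, there is one concrete claim in your writeup that is false and that causes you to miss what the paper's proof is actually about: you assert that ``the constant does not blow up because $M$ is a fixed well-conditioned tall matrix.'' Vandermonde matrices are the canonical example of \emph{ill}-conditioned matrices; the paper explicitly notes that $\kappa(M) = \sigma_{\max}/\sigma_{\min}$ grows exponentially with $T$ (its Figure shows $\kappa$ exploding already for $T \le 20$ with $d=2$), so $\sigma_{\min}(M)^{-1}$ is enormous and your hidden constant is practically useless. The paper's proof therefore does not stop at the bound $\bigO{\kappa\epsilon}$; its main content is the Tikhonov-regularized estimator $\tilde\beta = \beta + \inv{(M^\top M + \Gamma^\top\Gamma)}M^\top\epsilon$ with $\Gamma = \alpha I$, which replaces the inverted singular values $\inv{(\sigma_i^2)}$ by $\sigma_i/(\sigma_i^2+\alpha^2)$ and thereby caps the constant. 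Your parenthetical that regularization ``only improves conditioning and can only shrink $\Vts{P}$'' is correct, but in the paper it is not an optional improvement --- it is the step that makes the $\bigO{\epsilon}$ claim meaningful rather than vacuously true with an exponentially large constant. To repair your argument you need only delete the well-conditionedness claim and either (a) state explicitly that your constant is $\sigma_{\min}(M)^{-1}$ and may be exponentially large in $T$, or (b) incorporate the regularized solve as the paper does and bound the constant by $\max_i \sigma_i/(\sigma_i^2+\alpha^2) \le 1/(2\alpha)$.
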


\noindent We defer the proof to the supplementary material.

\subsection{Style Likelihood and Intensity Estimates}
\label{subsec: SLE_SIE}
In the previous sections, we used polynomial regression on the centrality functions to compute centrality polynomials. In this section, we analyze and discuss the first and second derivatives of the degree centrality, $\pd$, and closeness centrality, $\pc$, polynomials. Based on this analysis, which may vary for each specific style, we compute the Style Likelihood Estimate (SLE) and Style Intensity Estimate (SIE)~\cite{cmetric}, which are used to measure the probability and the intensity of a specific style.







\paragraph{Overtaking/Sudden Lane-Changes}
Overtaking is the act of one vehicle going past another vehicle, traveling in the same or an adjacent lane, in the same direction. From Definition~\ref{eq: closeness} in Section~\ref{sec: centrality}, the value of the closeness centrality increases as the vehicle moves towards the center and decreases as it moves away from the center. The SLE of overtaking can be computed by measuring the first derivative of the closeness centrality polynomial using $\textrm{SLE}(t) = \abs*{\frac{\partial \pc}{\partial t}}$. The maximum likelihood $\textrm{SLE}_\textrm{max}$ can be computed as $\textrm{SLE}_{\textrm{max}} = \max_{t \in \Delta t}{\textrm{SLE}}(t)$. The SIE of overtaking is computed by simply measuring the second derivative of the closeness centrality using~$ \textrm{SIE}(t) = \abs*{\frac{\partial^2 \pc}{\partial t^2}}$. Sudden lane-changes follow a similar maneuver to overtaking and therefore can be modeled using the same equations used to model overtaking. We visualize the use of closeness centrality to model overtaking in Figure~\ref{fig: closeness_demo}.

\paragraph{Overspeeding}

We use the degree centrality discussed in Section~\ref{sec: centrality} to model overspeeding. As $A_t$ is formed by adding rows and columns to $A_{t-1}$ (See Equation~\ref{eq: A_update}), the degree of the $i^\textrm{th}$ vehicle (denoted as $\theta_i$) is calculated by simply counting the number of non-zero entries in the $i^\textrm{th}$ row of $A_t$. Intuitively, an aggressively overspeeding vehicle will observe new neighbors (increasing degree) at a higher rate than neutral or conservative vehicles. Let the rate of increase of $\theta_i$ be denoted as $\theta_i^{'}$. By definition of the degree centrality and construction of $A_t$, respectively, the degree centrality for an aggressively overspeeding vehicle will monotonically increase. Conversely, the degree centrality for a conservative vehicle driving at a uniform speed or braking often at unconventional spots such as green light intersections will be relatively flat. Figure~\ref{fig: degree_demo} visualizes how the degree centrality can distinguish between an overspeeding vehicle and a vehicle driving at a uniform speed.  Therefore, the likelihood of overspeeding can be measured by computing,
\[\textrm{SLE}(t) = \abs*{\frac{\partial \pd}{\partial t}}\]

\noindent Similar to overtaking, the maximum likelihood estimate is given by $\textrm{SLE}_{\textrm{max}} = \max_{t  \in \Delta t}{\textrm{SLE}}(t)$. Figures~\ref{fig: part_b} and~\ref{fig: part_c} visualize how the degree centrality can distinguish between an overspeeding vehicle and a vehicle driving at a uniform speed. 

\paragraph{Weaving} Weaving is the act of a vehicle shifting its position from a side lane towards the center, and vice-versa~\cite{farah2009passing}. In such a scenario, the closeness centrality function values oscillates between low values on the side lanes and high values towards the center. Mathematically, weaving is more likely to occur near the critical points (points at which the function has a local minimum or maximum) of the closeness centrality polynomial. The critical points $t_c$ belong to the set $\mathcal{T} =\big\{ t_c \big | \frac{\partial \zeta_c(t_c)}{\partial t} = 0 \big\}$. Note that $\mathcal{T}$ also includes time-instances corresponding to the domain of constant functions that characterize conservative behavior. We disregard these points by restricting the set membership of $\mathcal{T}$ to only include those points $t_c$ whose $\varepsilon-$sharpness~\cite{dinh2017sharp} of the closeness centrality is non-zero. The set $\mathcal{T}$ is reformulated as follows,

\begin{equation}
    \begin{aligned}
    \mathcal{T} = \Bigg \{ t_c \bigg | \frac{\partial \zeta_c(t_c)}{\partial t} = 0 \Bigg \} \\
    \textrm{s.t.} \max_{t \in \mathcal{B}_\varepsilon(t_c)} \frac{\partial \pc}{\partial t}  \neq \frac{\partial \zeta_c(t_c)}{\partial t}\\
    \end{aligned}
    \label{eq: weaving}
\end{equation}

\noindent where $\mathcal{B}_\varepsilon (y) \in \mathbb{R}^d$ is the unit ball centered around a point $y$ with radius $\varepsilon$. The SLE of a weaving vehicle is represented by $\vts{\mathcal{T}}$, which represents the number of elements in $\mathcal{T}$. The $\textrm{SIE}(t)$ is computed by measuring the $\varepsilon-$sharpness value of each $t_c \in \mathcal{T}$. Figure~\ref{fig: part_d} visualizes how the degree centrality can distinguish between an overspeeding vehicle and a vehicle driving at a uniform speed. 

\paragraph{Conservative Vehicles}

Conservative vehicles, on the other hand, are not inclined towards aggressive maneuvers such as sudden lane-changes, overspeeding, or weaving. Rather, they tend to conform to a single lane~\cite{ahmed1999modeling} as much as possible, and drive at a uniform speed~\cite{sagberg2015review}, typically at or below the speed limit. The values of the closeness and degree centrality functions in the case of conservative vehicles thus, remain constant. Mathematically, the first derivative of constant polynomials is $0$. The SLE of conservative behavior is therefore observed to be approximately equal to $0$. Additionally, the likelihood that a vehicle drives uniformly in a single lane during time-period $\Delta t$ is higher when,

\[\abs*{\frac{\partial \pc}{\partial t}} \approx 0 \  \textrm{and} \ \max_{t \in \mathcal{B}_\varepsilon(t^{*})} \textrm{SLE}(t) \approx \textrm{SLE}(t_c) .\]

\noindent The intensity of such maneuvers will be low and can be reflected in the lower values for the SIE. 

\begin{algorithm}[t]
    \SetKwInOut{Input}{Input}
    \SetKwInOut{Output}{Output}
\SetKwComment{Comment}{$\triangleright$\ }{}
\SetAlgoLined
\Input{$M$ participants, set of starting frames $S = \{s_1, s_2, \ldots, s_M\}$, set of ending frames $E = \{e_1, e_2, \ldots, e_M\}$ }
\Output{$\EX[T]$ for a video}
$s^* = \min S$\\
$e^* = \max E$\\
Initialize a counter $c_t = 0$ for each frame $t \in [s^*, e^*]$\\
\For {$t \in [s^*, e^*]$}{
\If{$t \in [s_m, e_m]$}{
$c_t \gets c_t + 1$
}
$\mathcal{P}(T=t) = c_t $
}
$\EX[T] = \sum_t tc_t$, $t = s^*, s^*+1, \ldots, e^*$
\caption{Computing $\EX[T]$ for each video in a dataset.}
\label{alg: ex}
\end{algorithm}

\begin{figure*}[t]
\centering
\begin{subfigure}[h]{0.24\linewidth}
    \includegraphics[width=\textwidth]{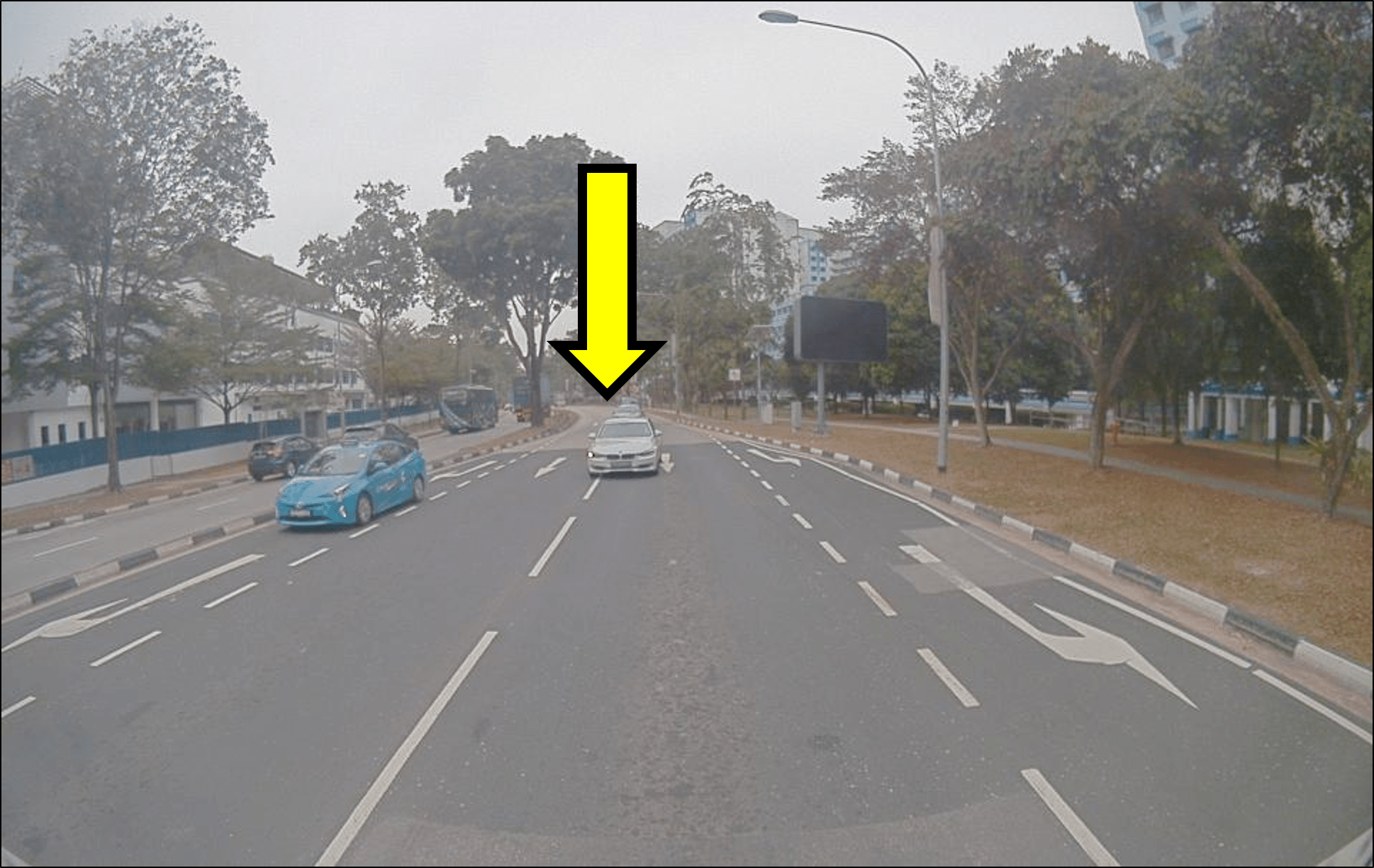}
    \caption{Frame $65$.}
    \label{fig: sg1}
  \end{subfigure}
   \begin{subfigure}[h]{0.24\linewidth}
    \includegraphics[width=\textwidth]{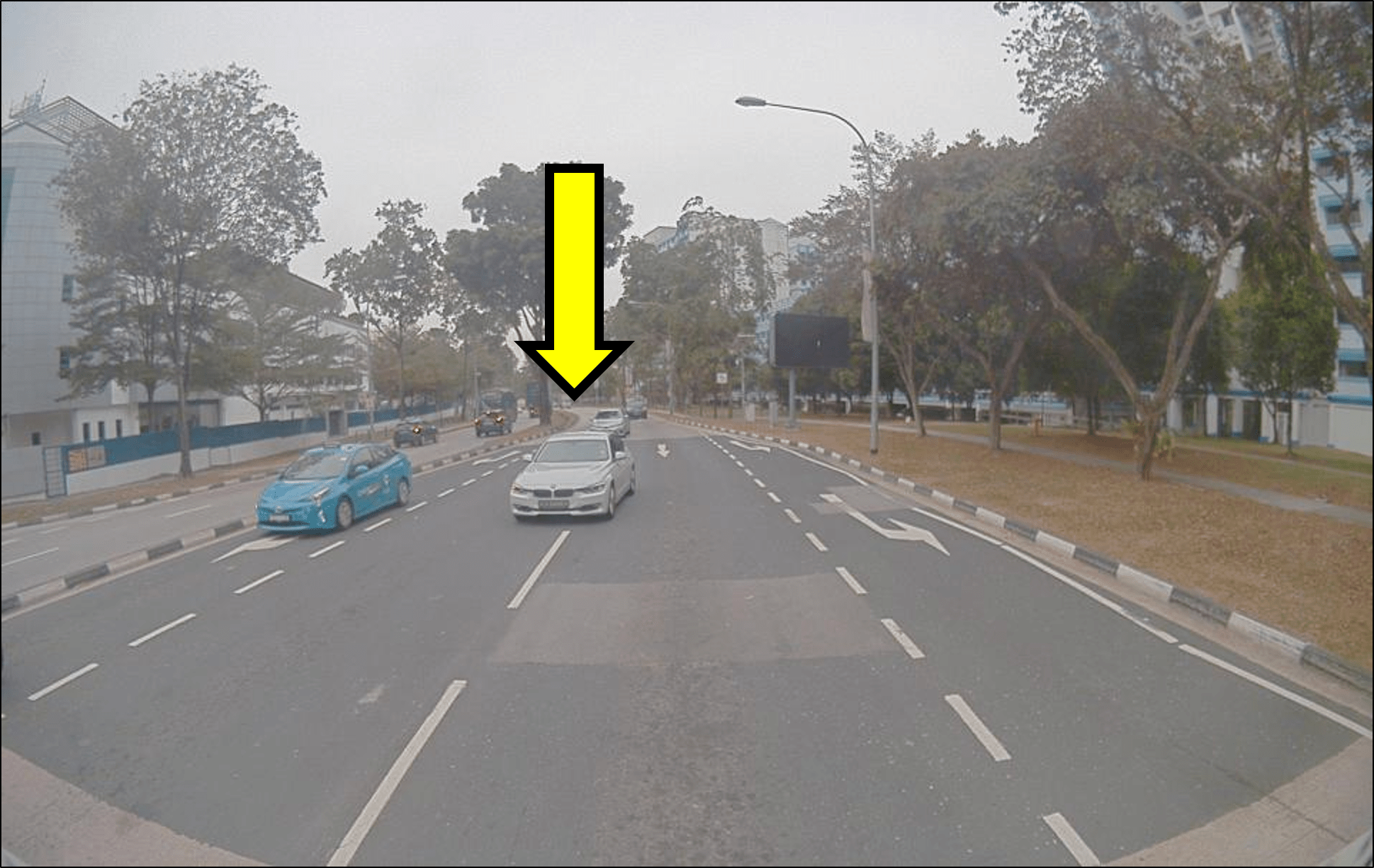}
    \caption{Frame $70$.}
    \label{fig: sg2}
  \end{subfigure}
  \begin{subfigure}[h]{0.24\linewidth}
    \includegraphics[width=\textwidth]{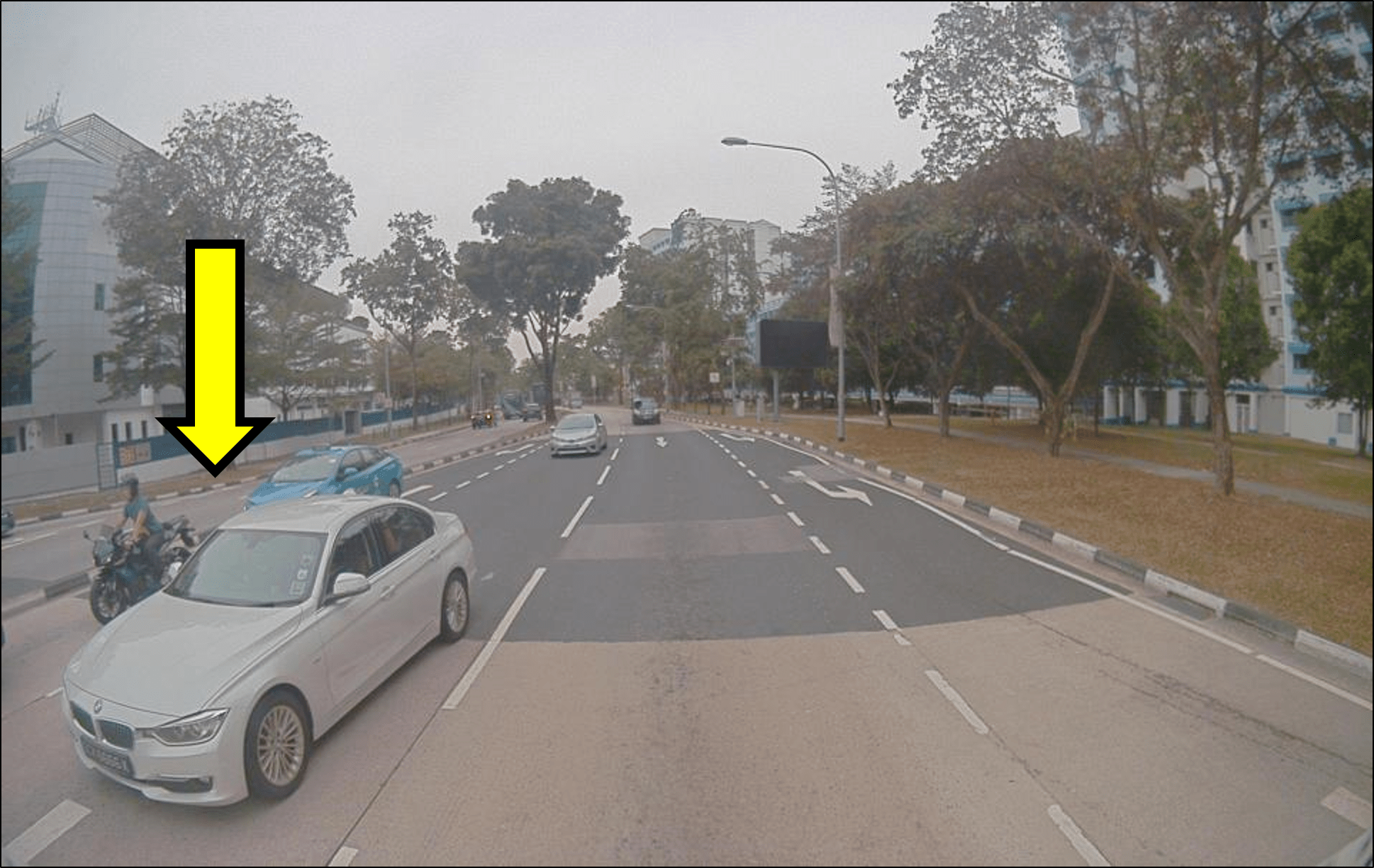}
    \caption{Frame $75$.}
    \label{fig: sg3}
  \end{subfigure}
  \begin{subfigure}[h]{0.24\linewidth}
    \includegraphics[width=\textwidth]{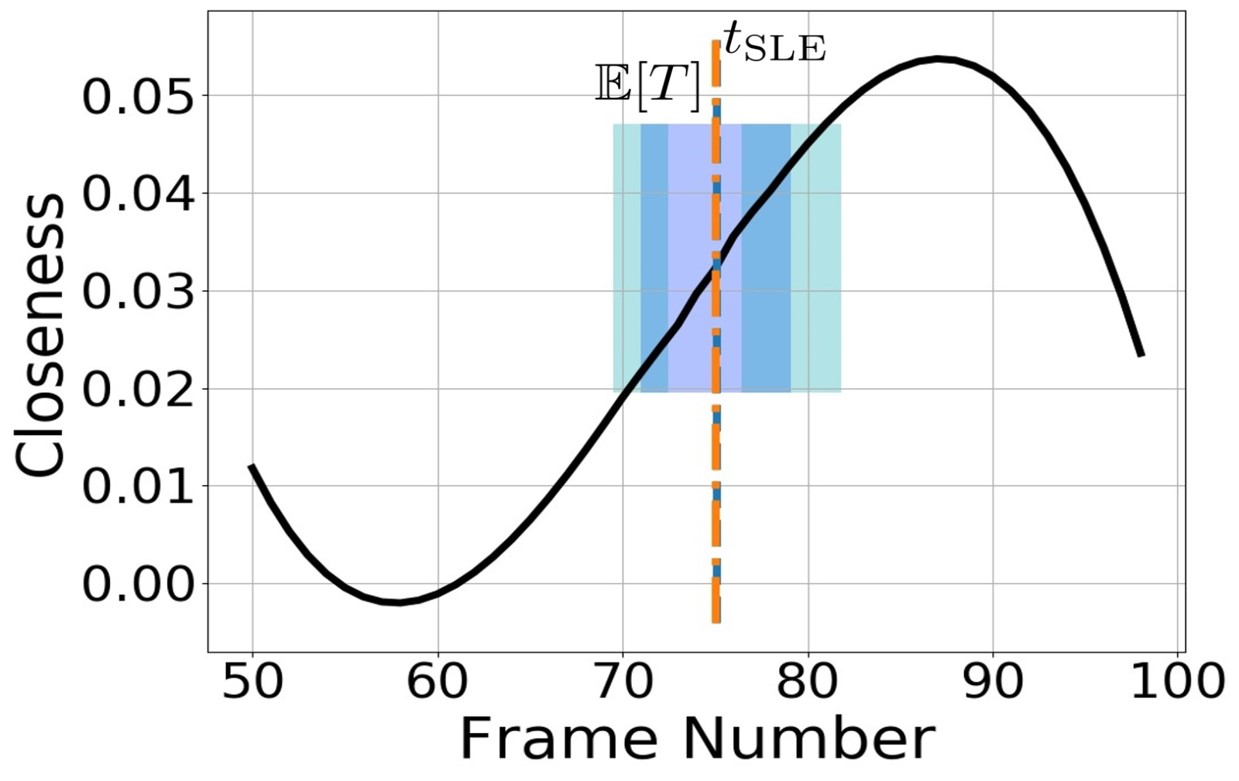}
    \caption{Sudden Lane-Change.}
    \label{fig: sg4}
  \end{subfigure}
  %
  %
    \begin{subfigure}[h]{0.24\linewidth}
    \includegraphics[width=\textwidth]{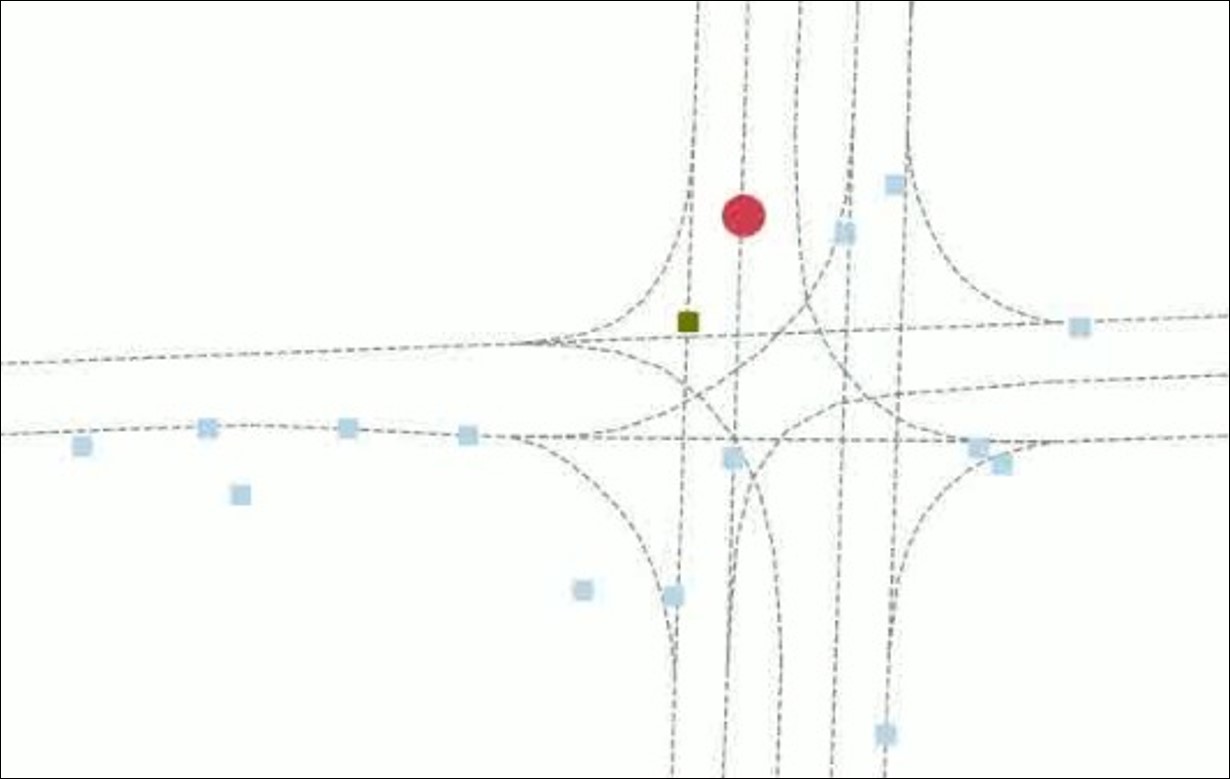}
    \caption{Frame $20$.}
    \label{fig: argo1}
  \end{subfigure}
  \begin{subfigure}[h]{0.24\linewidth}
    \includegraphics[width=\textwidth]{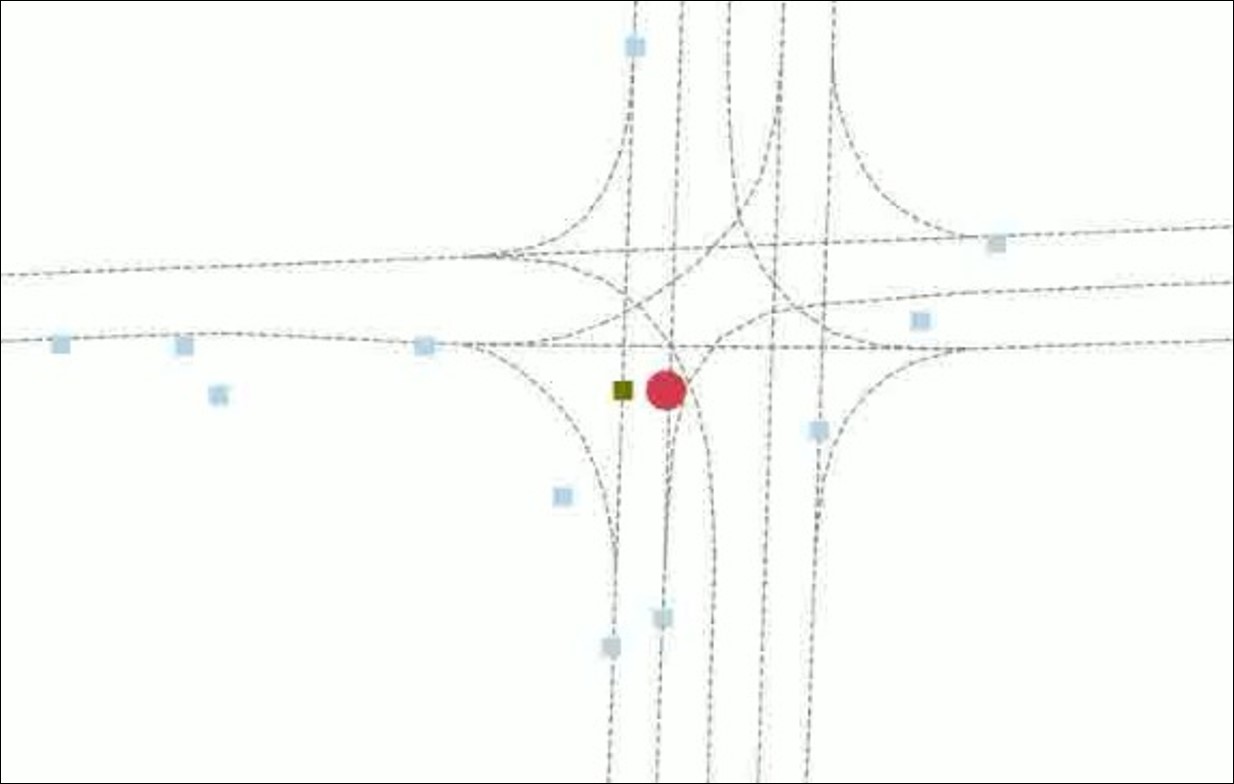}
    \caption{Frame $25$.}
    \label{fig: argo2}
  \end{subfigure}
    \begin{subfigure}[h]{0.24\linewidth}
    \includegraphics[width=\textwidth]{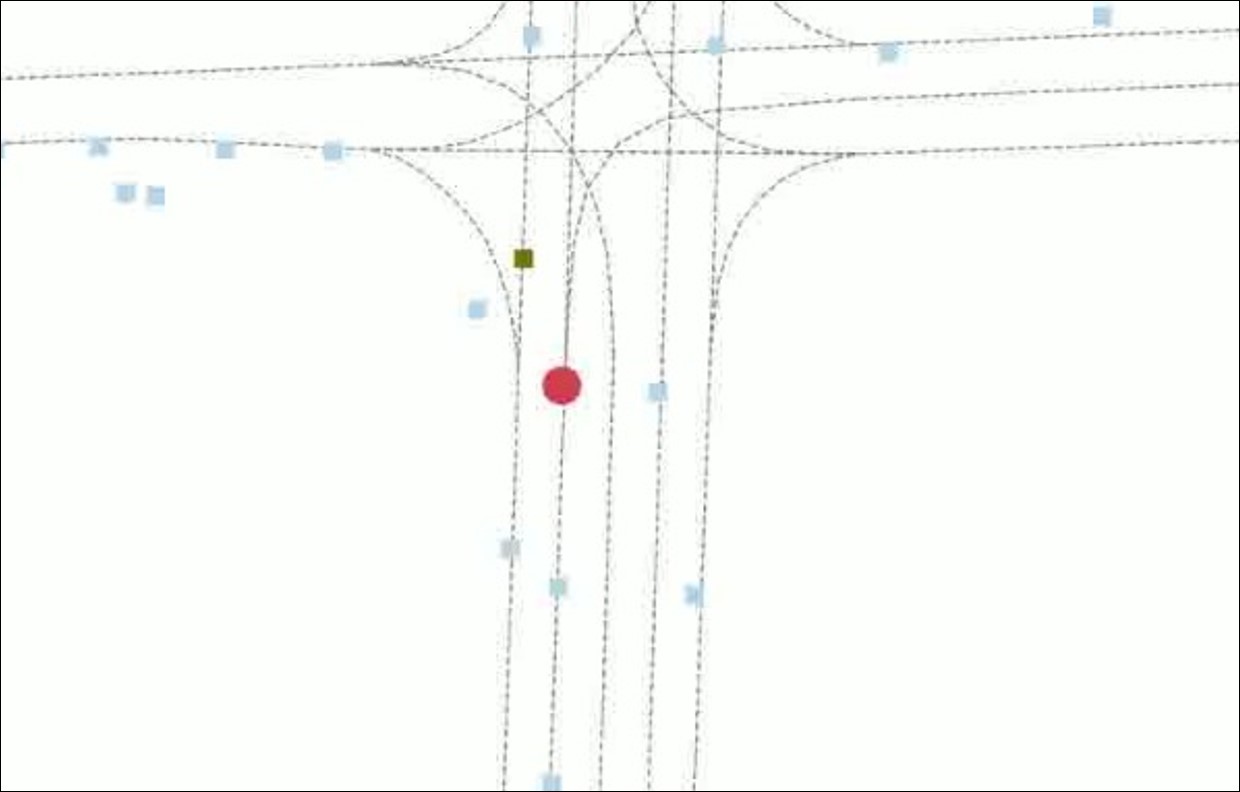}
    \caption{Frame $20$.}
    \label{fig: argo3}
  \end{subfigure}
    \begin{subfigure}[h]{0.24\linewidth}
    \includegraphics[width=\textwidth]{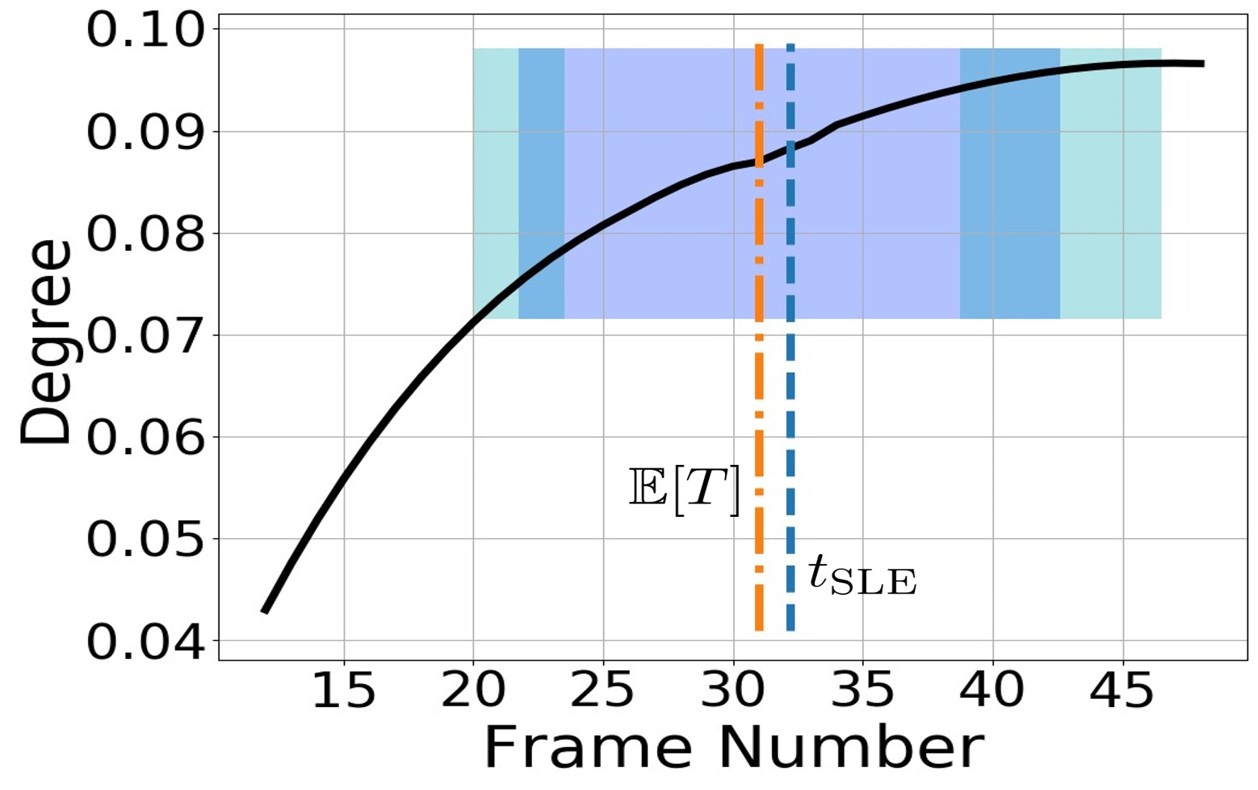}
    \caption{Overspeeding.}
    \label{fig: argo4}
  \end{subfigure}
    \begin{subfigure}[h]{0.24\linewidth}
    \includegraphics[width=\textwidth]{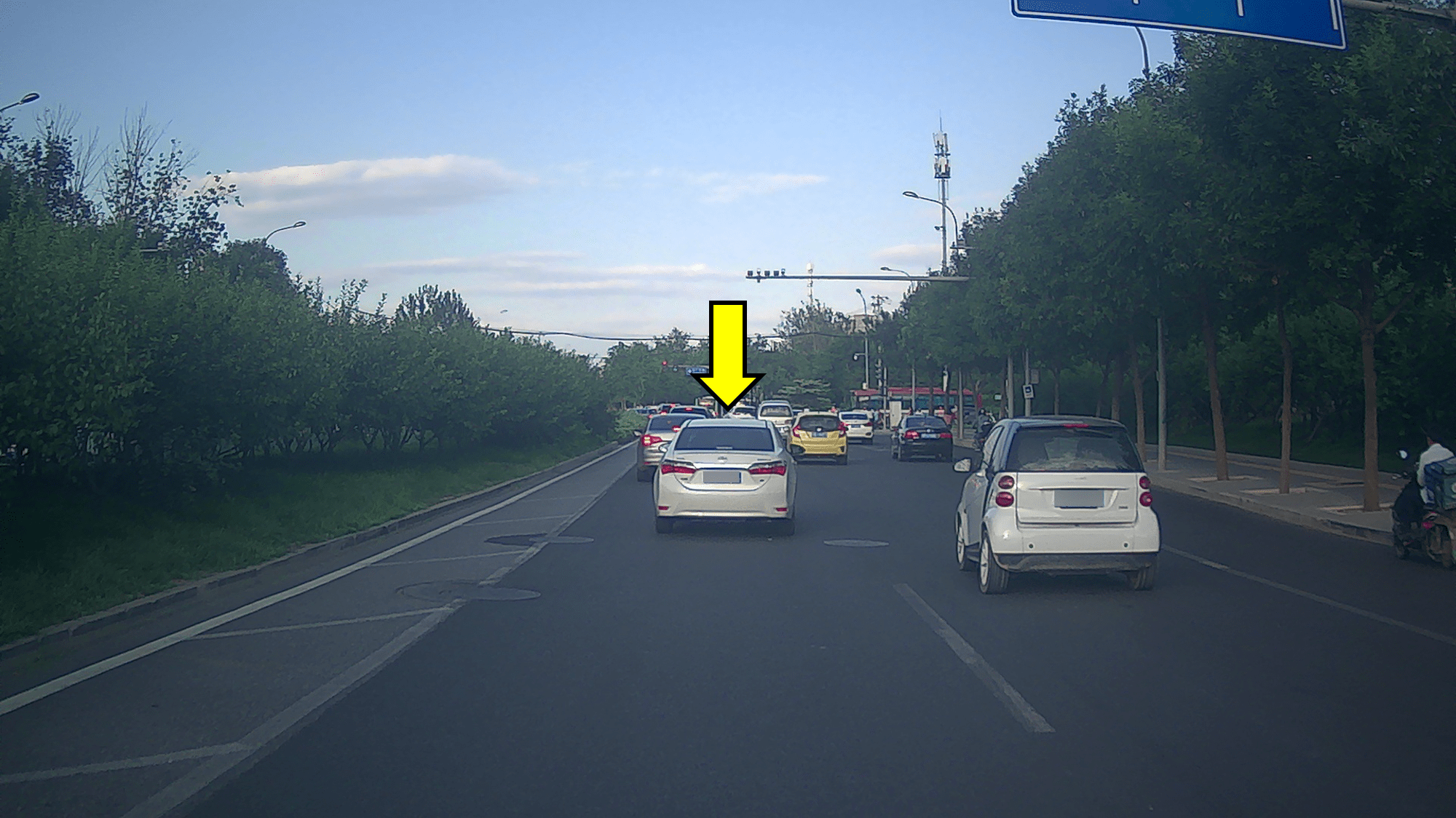}
    \caption{Frame $75$.}
    \label{fig: argo1}
  \end{subfigure}
  \begin{subfigure}[h]{0.24\linewidth}
    \includegraphics[width=\textwidth]{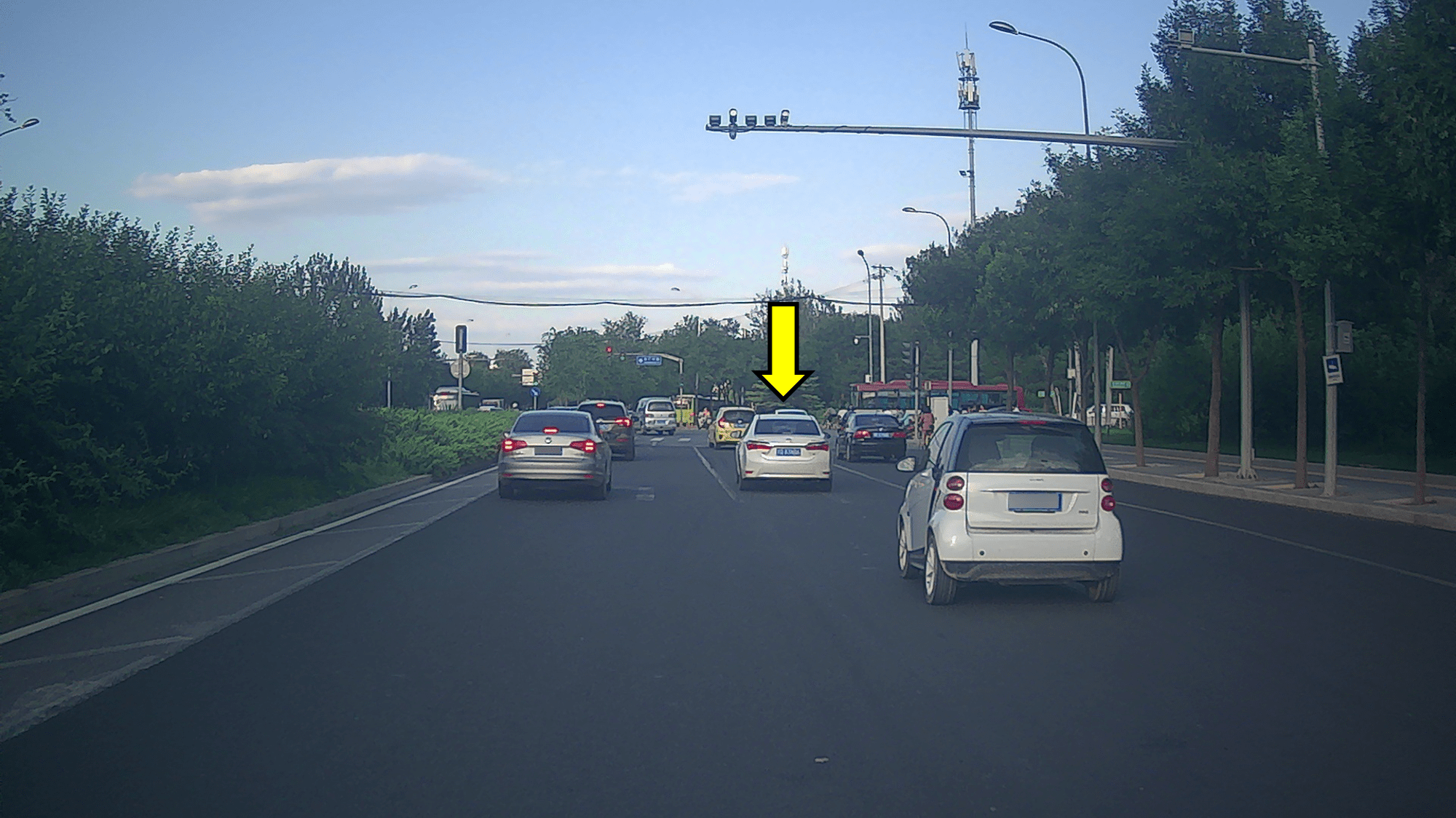}
    \caption{Frame $80$.}
    \label{fig: argo2}
  \end{subfigure}
    \begin{subfigure}[h]{0.24\linewidth}
    \includegraphics[width=\textwidth]{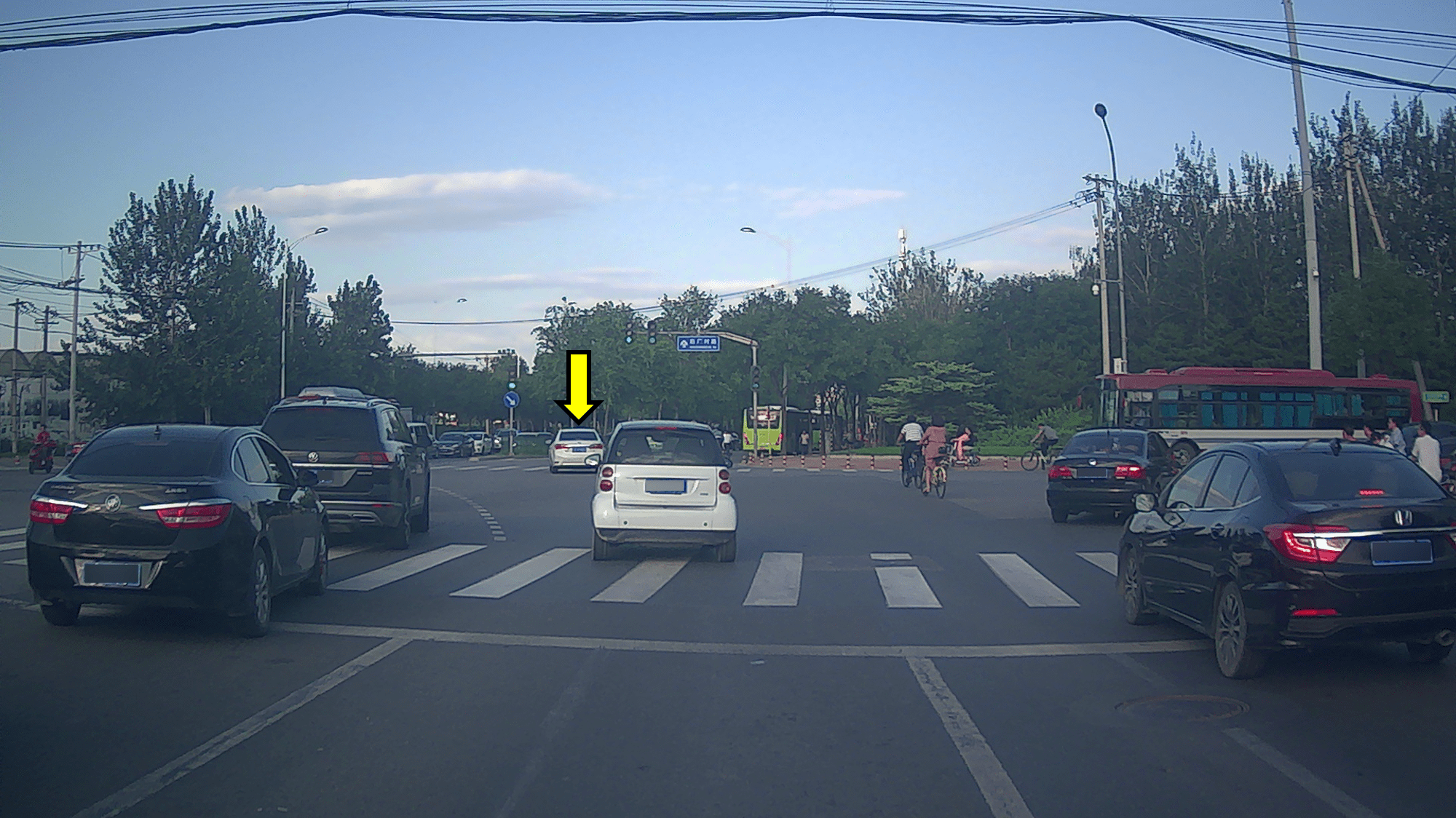}
    \caption{Frame $85$.}
    \label{fig: argo3}
  \end{subfigure}
    \begin{subfigure}[h]{0.24\linewidth}
    \includegraphics[width=\textwidth]{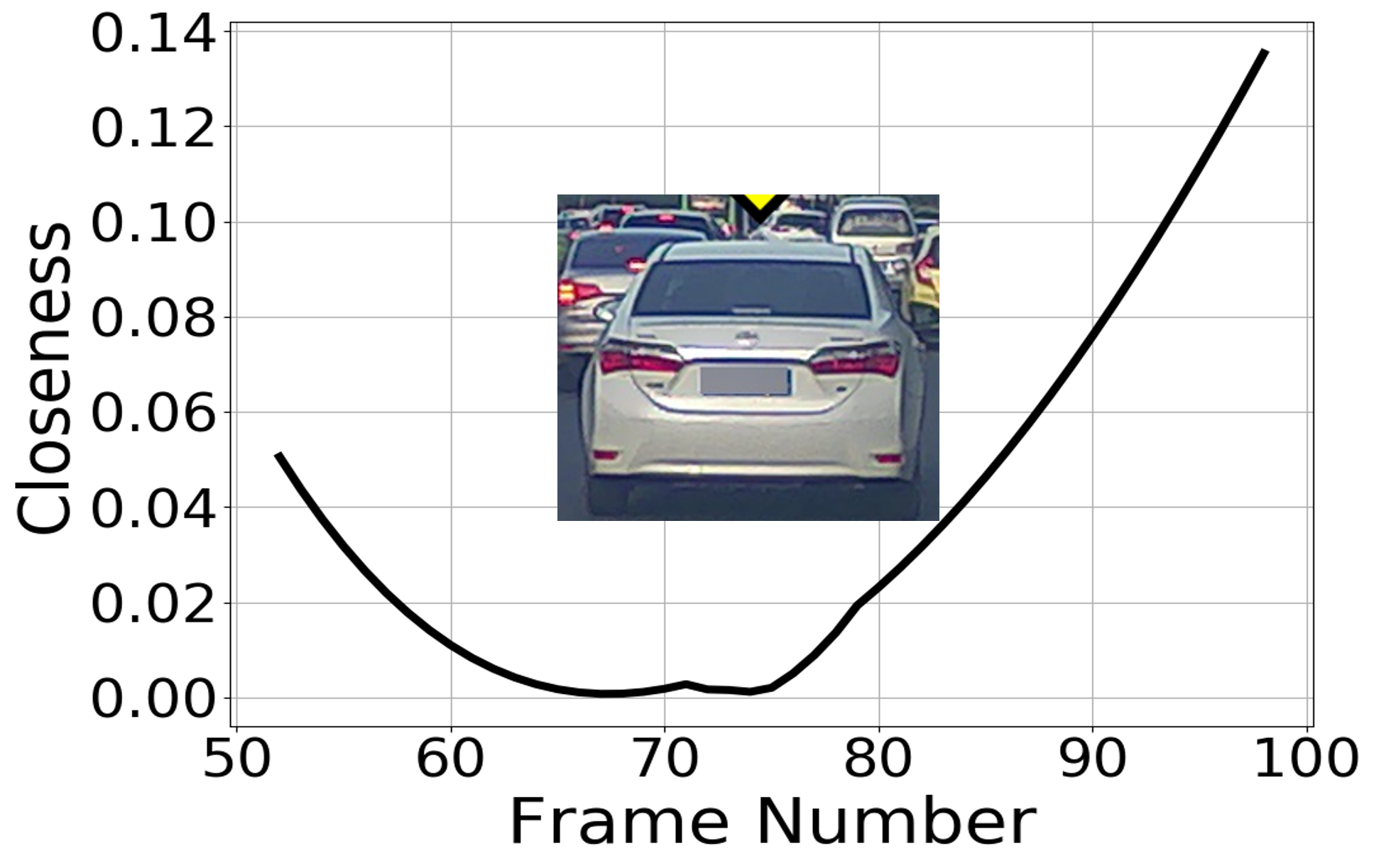}
    \caption{Weaving.}
    \label{fig: argo4}
  \end{subfigure}
    \begin{subfigure}[h]{0.24\linewidth}
    \includegraphics[width=\textwidth]{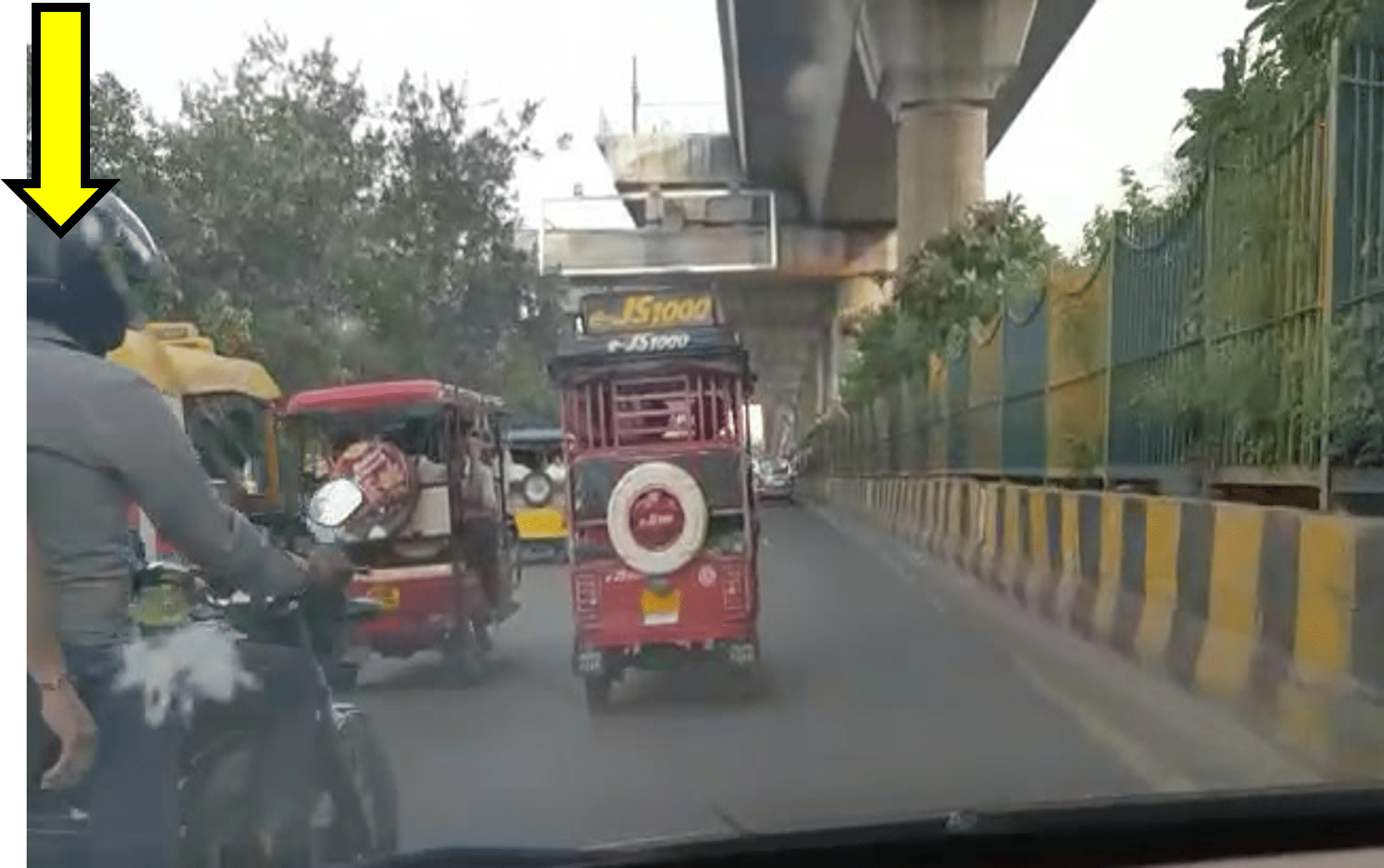}
    \caption{Frame $66$.}
    \label{fig: argo1}
  \end{subfigure}
  \begin{subfigure}[h]{0.24\linewidth}
    \includegraphics[width=\textwidth]{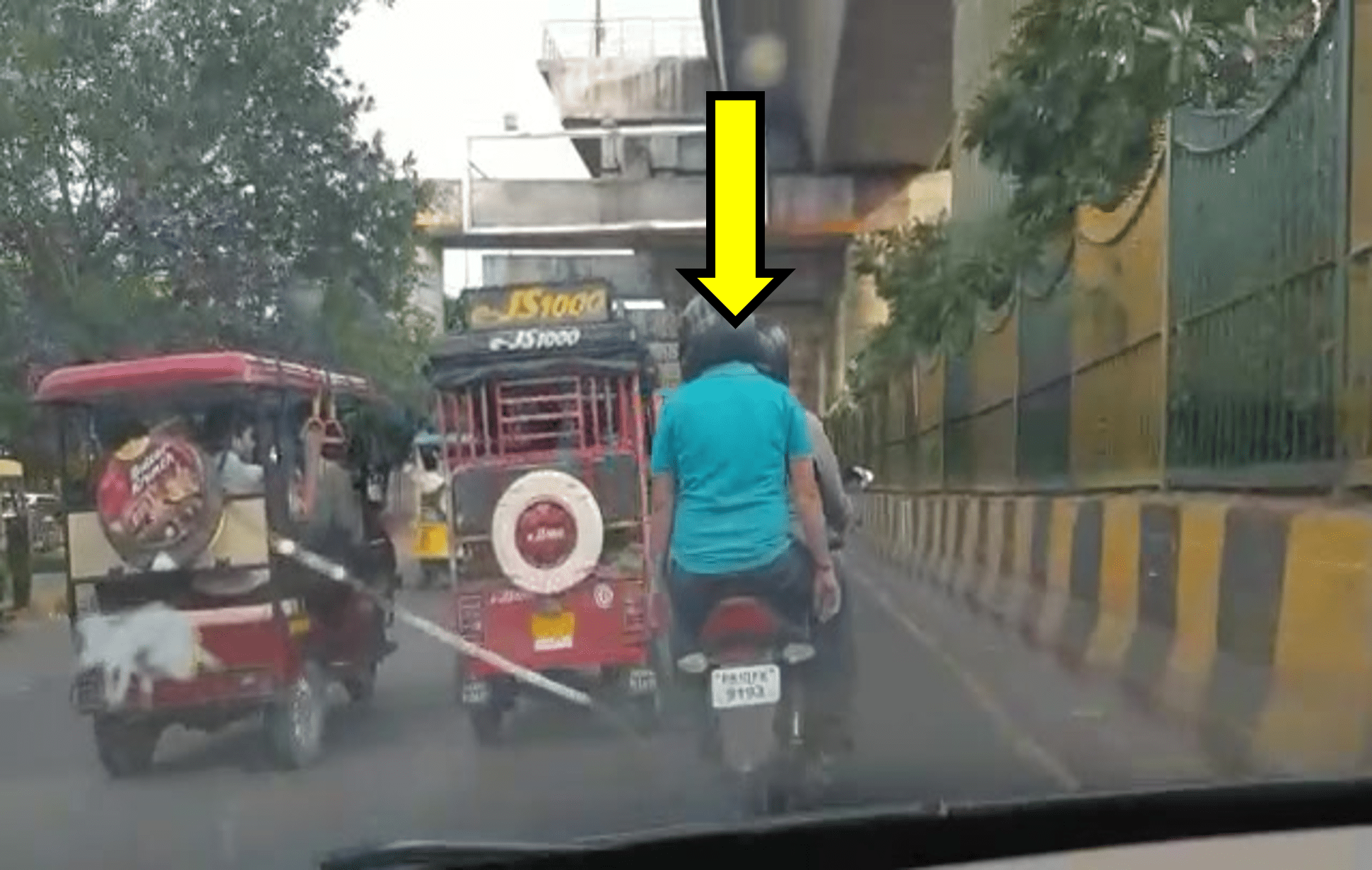}
    \caption{Frame $68$.}
    \label{fig: argo2}
  \end{subfigure}
    \begin{subfigure}[h]{0.24\linewidth}
    \includegraphics[width=\textwidth]{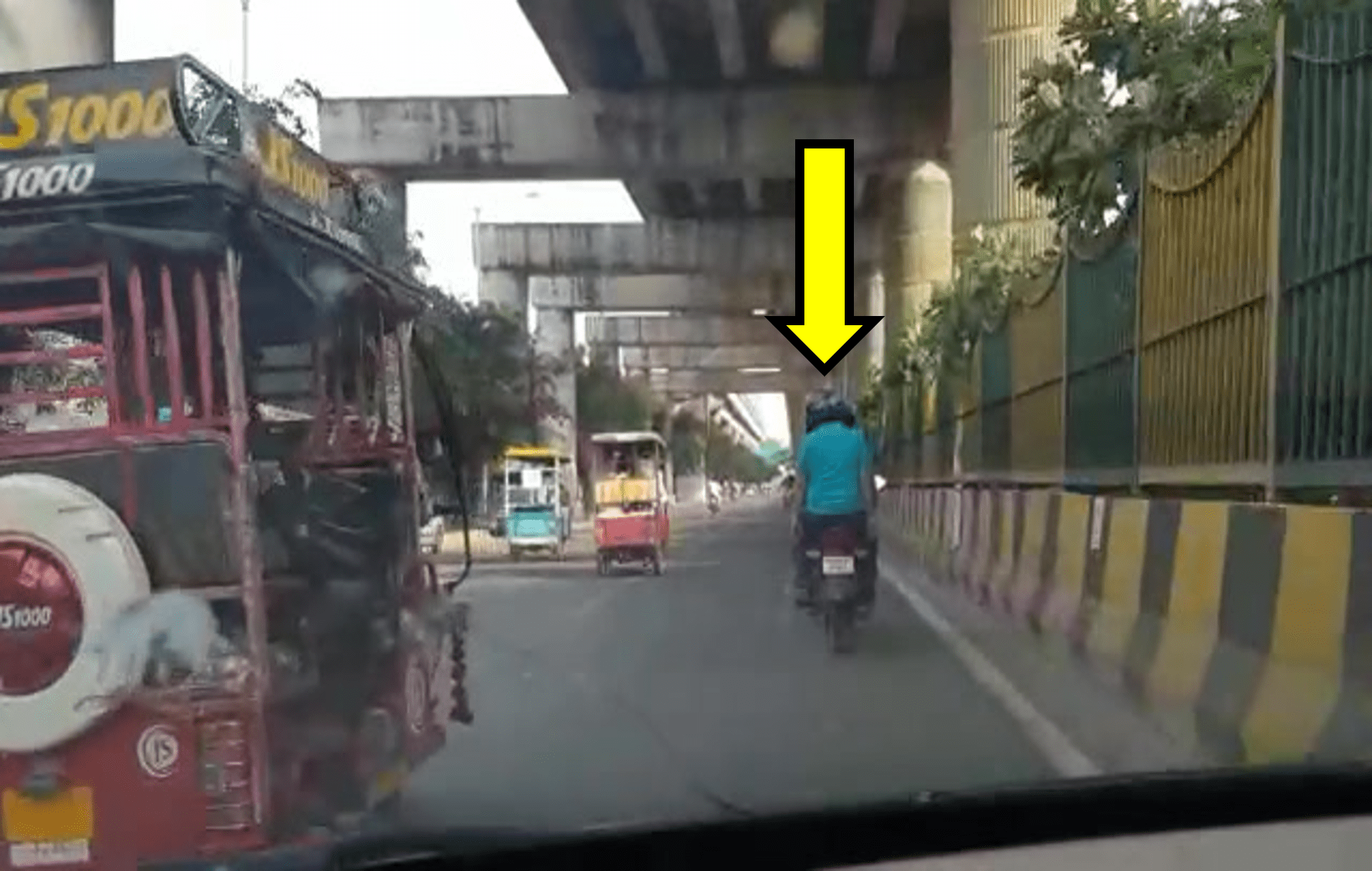}
    \caption{Frame $70$.}
    \label{fig: argo3}
  \end{subfigure}
    \begin{subfigure}[h]{0.24\linewidth}
    \includegraphics[width=\textwidth]{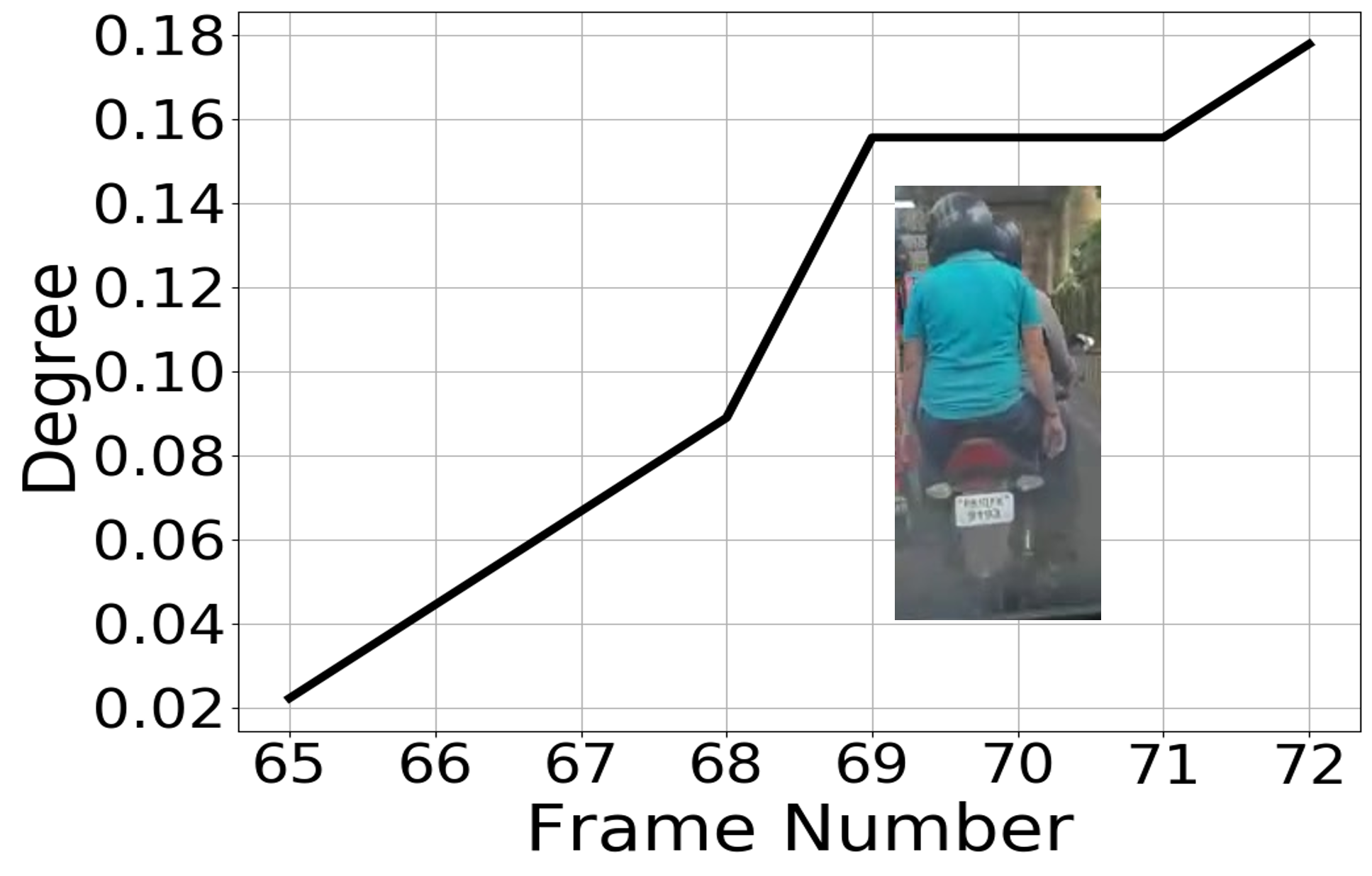}
    \caption{Overspeeding.}
    \label{fig: argo4}
  \end{subfigure}
\caption{\textbf{Driver Behavior Modeling in Singapore \textit{(top row)}, U.S. \textit{(second row)}, China \textit{(third row)}, and India \textit{(bottom row)}:} In each row, the first three figures demonstrate the trajectory of a vehicle executing an aggressive driving style (sudden lane change, overspeeding, weaving, and overspeeding, respectively), while the fourth figure shows the corresponding closeness or degree centrality plot. The shaded colored regions overlaid on the graphs in the first two rows are color heat maps that correspond to $\mathcal{P}(T)$ (line $8$, Algorithm~\ref{alg: ex}). \textbf{Conclusion:} The expected time frame of a driving style reported by the participants of the user study (\textbf{\textcolor{orange}{orange}} dashed line) matches that of the time of maximum likelihood computed by StylePredict (\textbf{\textcolor{blue}{blue}} dashed line).}
  \label{fig: qualitative}
  \vspace{-10pt}
\end{figure*}

\section{Experiments and Results}
\label{sec: experiments_and_results}
We begin with a discussion of the evaluation metric, the Time Deviation Error (TDE), for measuring the accuracy of behavior prediction methods in Section~\ref{subsec: TDE_metric}. Then, we describe the real-world traffic datasets and simulation environment used for testing our approach and outline the annotation algorithm used to generate ground-truth labels for aggressive and conservative vehicles in Section~\ref{subsec: datasets}. Finally, we use the TDE to evaluate our approach and analyze the results in real-world traffic as well as simulation in Section~\ref{subsec: real_world_traffic_analysis}.

\subsection{Evaluation Metric}
\label{subsec: TDE_metric}
We use Time Deviation Error (TDE)~\cite{cmetric}, to measure the temporal difference between a human prediction and a model prediction. For example, if a vehicle executes a rash overtake at the $5^\textrm{th}$ frame and our model predicts the behavior at the $7^\textrm{th}$ frame, then the TDE$=0.067$ seconds, assuming $30$ frames per second. A lower value for TDE indicates a more accurate behavior prediction model. The TDE is given by the following equation,

\begin{equation}
    \textrm{TDE}_\textrm{style} =   \abs*{\frac{t_\textrm{SLE} - \EX[T]}{f}} 
    \label{eq: TDE}
\end{equation}

\noindent where $\EX$ denotes the expected time-stamp of an exhibited behavior in the ground-truth annotated by a human and $f$ is the frame rate of the video. $f=2$ Hz for the Singapore dataset and $f=10$ Hz for the U.S. dataset. In other words, the $\textrm{TDE}_\textrm{style}$ computes the time difference between the mean frame, $\EX[T]$, reported by the participants and the frame with the maximum likelihood, $t_\textrm{SLE}$, predicted by our approach. While $t_\textrm{SLE}$ is computed using $\argmax_{t \in \Delta t}{\textrm{SLE}(t)}$ as explained in Section~\ref{sec: approach}, $\EX[T]$ is computed using Algorithm~\ref{alg: ex}, described in the following section.

A natural question that may be asked is why not use classification accuracy as a metric -- that is, to measure the number of correctly modeled styles as a fraction of the total number of styles. The answer is that since the model is rule-based using the representations and definitions in Sections~\ref{subsec: DGG} and~\ref{sec: centrality}, the classification accuracy metric (and variants thereof) always turns out to be $100\%$. We verified this in our experiments.

\subsection{Datasets and Simulation Environment}
\label{subsec: datasets}

\paragraph{Simulation Environment} 

We use the Highway-Env simulator~\cite{leurent2019social} developed using PyGame~\cite{pygame}. The simulator consists of a 2D environment where vehicles are made to drive along a multi-lane highway using the Bicycle Kinematic Model~\cite{polack2017kinematic} as the underlying motion model. The linear acceleration model is based on the Intelligent Driver Model (IDM)~\cite{treiber2000congested}, while the lane changing behavior is based on the MOBIL~\cite{kesting2007general} model.

The original simulator proposed by Leurent et al.~\cite{leurent2019social} generates homogeneous agents that are parameterized by default to behave conservatively. We modified the simulation parameters and designed two different classes of heterogeneity (see Figure~\ref{fig: simulator_figs}) to produce both conservative (blue agents) and aggressive vehicles (green agent). The parameters used to generate the two classes of vehicles are provided in the supplementary material.


\paragraph{Real-World Datasets}

We have evaluated StylePredict on traffic data collected from geographically diverse regions of the world. In particular, we use data collected in Pittsburgh (U.S.A)~\cite{Argoverse}, New Delhi (India)~\cite{chandra2019traphic}, Beijing (China)~\cite{wang2019apolloscape}, and Singapore (private dataset).
The format of the data includes the timestamp, road-agent I.D., road-agent type, the spatial coordinates, and the location. We understand that the characteristics of drivers in a particular city may not reflect similarly in other cities of the same country. Therefore, all results presented in this work correspond to the traffic in the specific city where the dataset is recorded.

\begin{table}[t]
\centering
\caption{We report the Time Deviation Error (TDE) (in seconds (s)) for the following driving styles: Overspeeding (OS), Overtaking (OT), Sudden Lane-Changes (SLC), and Weaving (W). The TDE indicates the absolute difference between the times taken by a human expert and our proposed approach to identify a driving style. Lower is better. \textit{Conclusion:} On average, we find that it is easiest to predict weaving and sudden lane-changes in India. This observation agrees with our cultural analysis in Section~\ref{subsec: real_world_traffic_analysis} where we show that higher heterogeneity (associated with traffic in India) and lack of conformity in lane-driving leads to more prominent weaving and lane-changing behaviors.}

\centering
\begin{tabular}{lcccc} 
\toprule
 Dataset\Bstrut &  \multicolumn{4}{c}{Styles} \\
\cline{2-5}
& OS \Tstrut & OT & SLC & W\\
\midrule
U.S.~\cite{Argoverse} \Tstrut & 0.25s & 0.67s  & 0.23s & 0.26s  \\
Singapore & 0.54s & 0.88s & 1.21s & 1.28s \\
China~\cite{wang2019apolloscape} & 0.74s & 0.44s & 0.39s & 0.23s\\
India~\cite{chandra2019traphic}  & 0.81s & 0.38s & 0.19s & 0.06s\\
\bottomrule
\end{tabular}
\label{tab: accuracy}
\vspace{-15pt}
\end{table}

One of the main issues with these datasets is that they do not contain labels for aggressive and conservative driving behaviors. Therefore, we create the ground-truth driver behavior annotations using Algorithm~\ref{alg: ex}. We directly use the raw trajectory data from these datasets without any pre-processing or filtering step. For each video, the final ground-truth annotation (or label) is the expected value of the frame at which the ego-vehicle is most likely to be executing an aggressive style. This is denoted as $\EX[T]$. The goal for any driver behavior prediction model should be to predict the aggressive style at a time-stamp as close to $\EX[T]$ as possible. The implied difference in the two time-stamps is measured by the TDE metric.


The TDE metric is computed by Equation~\ref{eq: TDE}. Here, $t_\textrm{SLE} =  \argmax_{t \in \Delta t}{\textrm{SLE}(t)}$, as explained in Section~\ref{sec: approach}. We use Algorithm~\ref{alg: ex} for computing $\EX[T]$. For each video, $M$ participants were asked to mark the starting and end frames for the time-period during which a vehicle is observed executing an aggressive maneuver. For each video, we end up with $S = \{s_1, s_2, \ldots, s_M\}$ and $E = \{e_1, e_2, \ldots, e_M\}$ start and end frames, respectively. We extract the overall start and end frame by finding the minimum and maximum value in $S$ and $E$, respectively (lines $1-2$). We denote these values as $s^*$ and $e^*$. Next, we initialize a distinct counter, $c_t$, for each frame $t \in [s^*, e^*]$ (line $3$). We increment a counter $c_t$ by $1$ if $t \in [s_m, e_m]$ (lines $4-7$). The value of the counter $c_t$ is assigned to $\mathcal{P}(T)$ (line $8$). The $\EX[T]$ of $\mathcal{P}(T)$ can then be computed using the standard definition of expectation of a discrete probability mass function (line $10$). Algorithm~\ref{alg: ex} is applied separately for each video in each dataset.



\begin{figure}[t]
\centering
  \includegraphics[width=\linewidth]{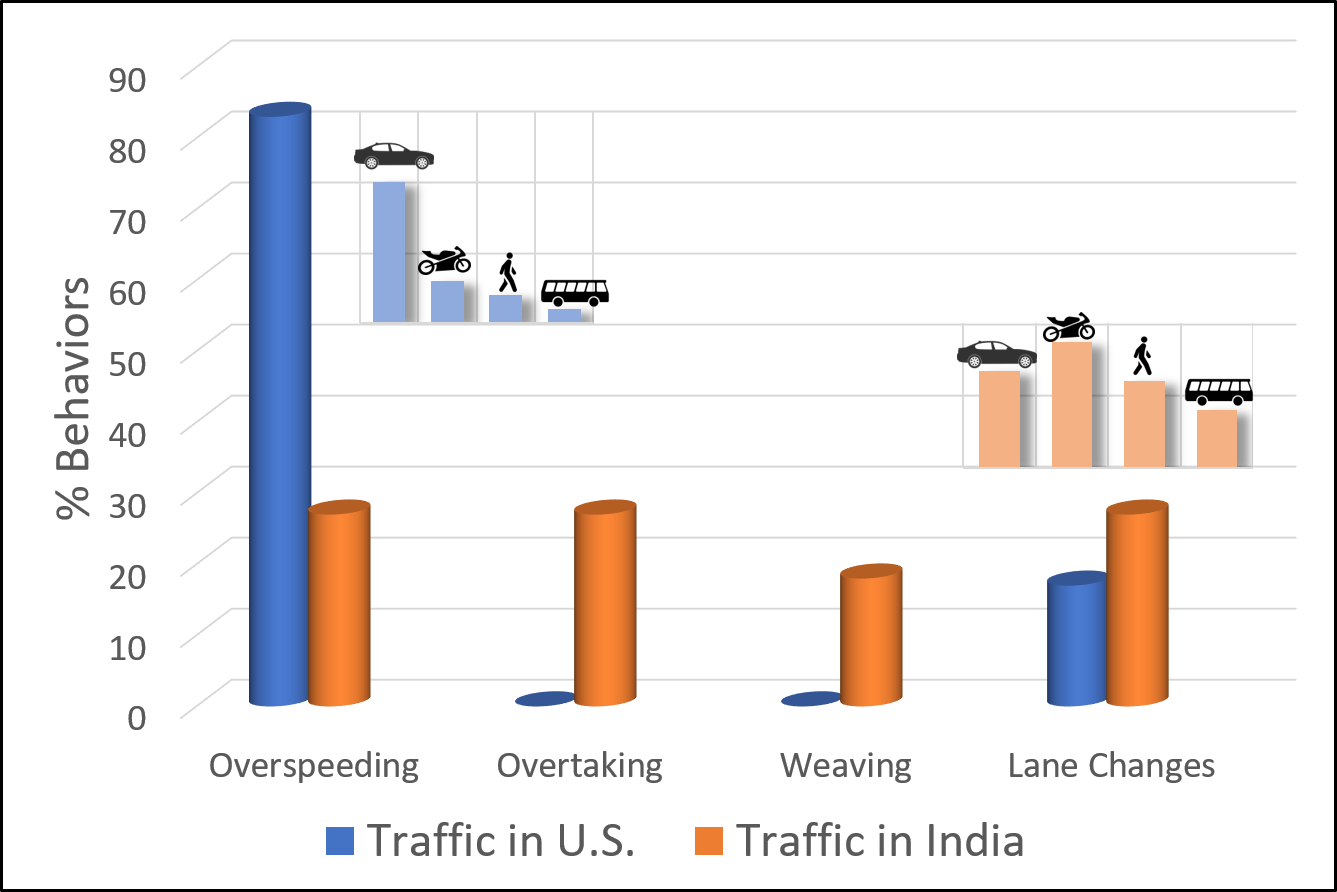}

\caption{Distribution of vehicles (\textit{inner graphs}) and driver behaviors (\textit{outer graph}) in different regions. \textbf{Outer graph conclusion:} The distribution of driving styles is uniform in India and skewed towards longitudinal styles in the U.S. \textbf{Inner graphs conclusion:} The traffic is highly heterogeneous in nature in India whereas the traffic in the U.S. is primarily composed of cars.}

\label{fig: culture2}

\end{figure}

\begin{figure*}[t]
\centering
\begin{subfigure}[h]{.49\linewidth}
  \includegraphics[width=\linewidth]{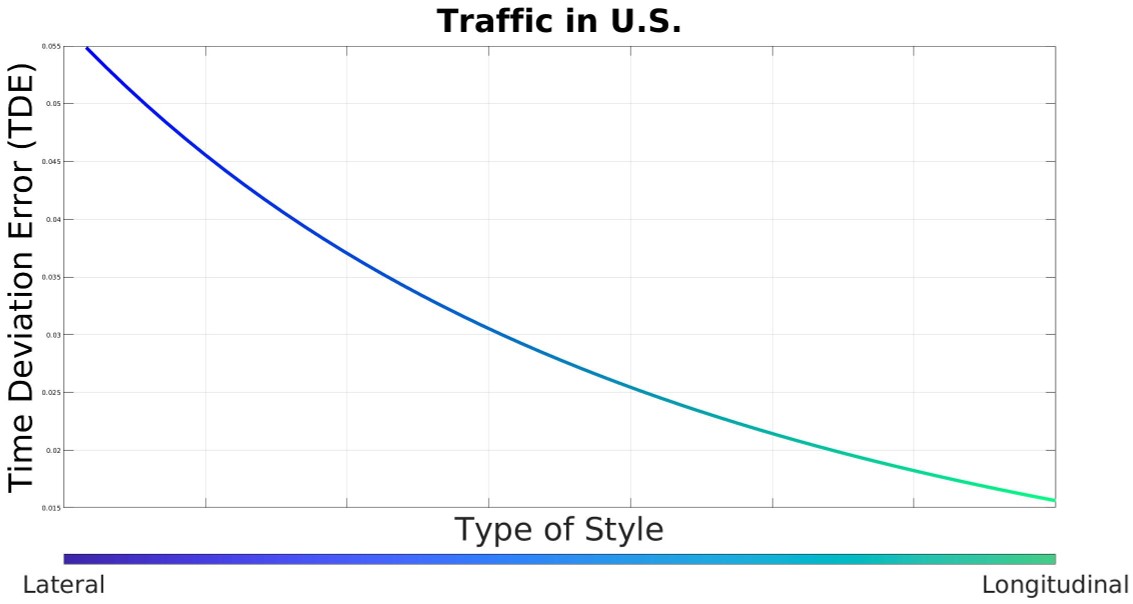}
\caption{Lateral styles (higher TDE) versus longitudinal styles (lower TDE)}
\label{fig: UScurve}
\end{subfigure}
\begin{subfigure}[h]{.49\linewidth}
  \includegraphics[width=\linewidth]{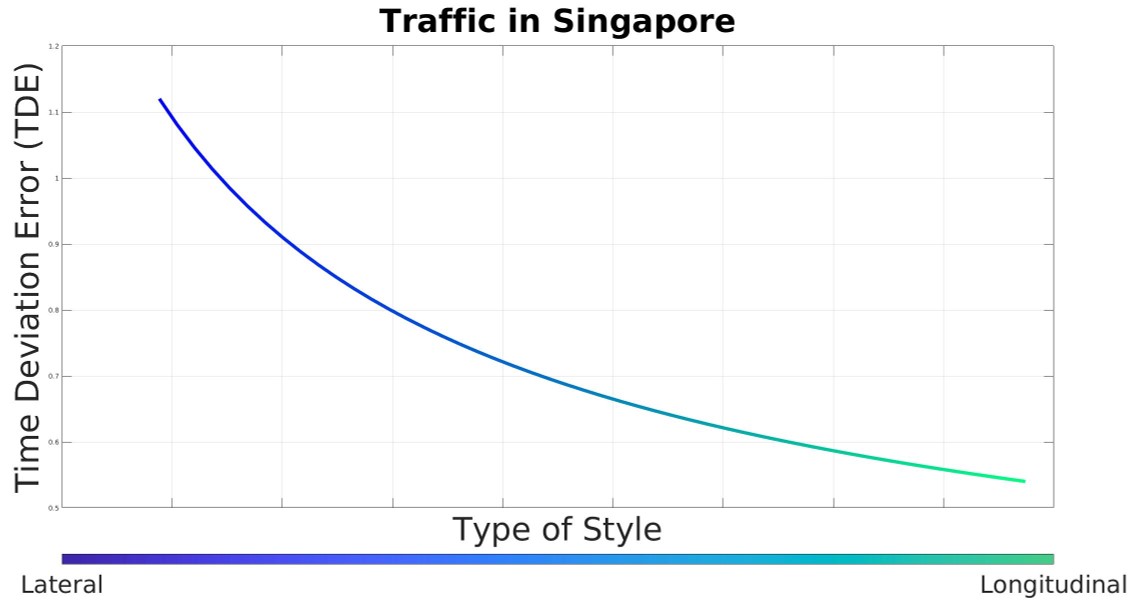}
\caption{Lateral styles (higher TDE) versus longitudinal styles (lower TDE)}
\label{fig: SGcurve}
\end{subfigure}
\begin{subfigure}[h]{.49\linewidth}
  \includegraphics[width=\linewidth]{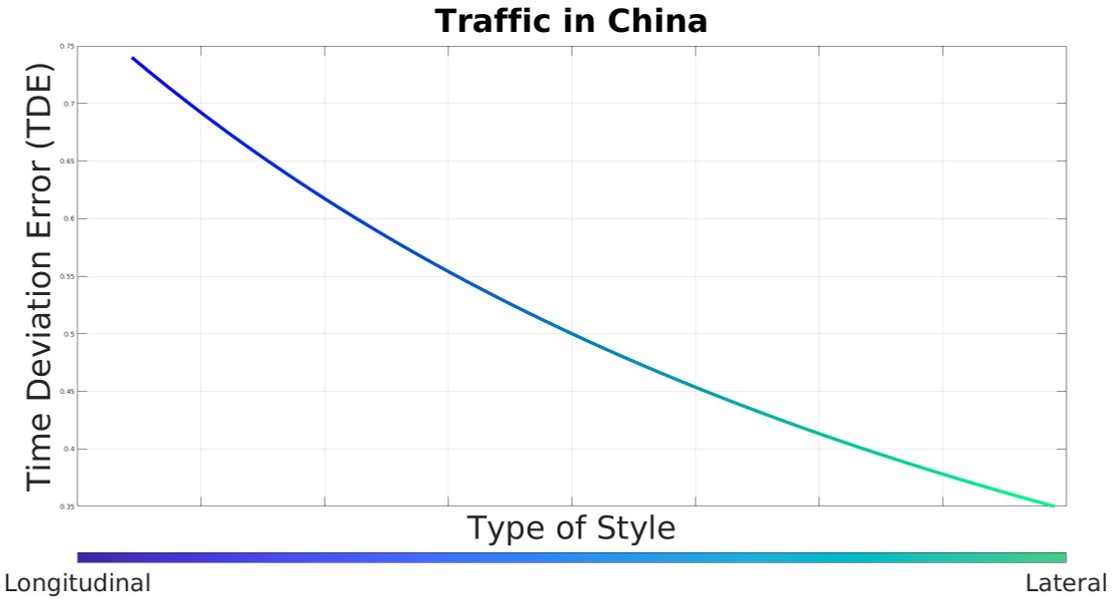}
\caption{Longitudinal styles (higher TDE) versus lateral styles (lower TDE)}
\label{fig: CHcurve}
\end{subfigure}
\begin{subfigure}[h]{.49\linewidth}
  \includegraphics[width=\linewidth]{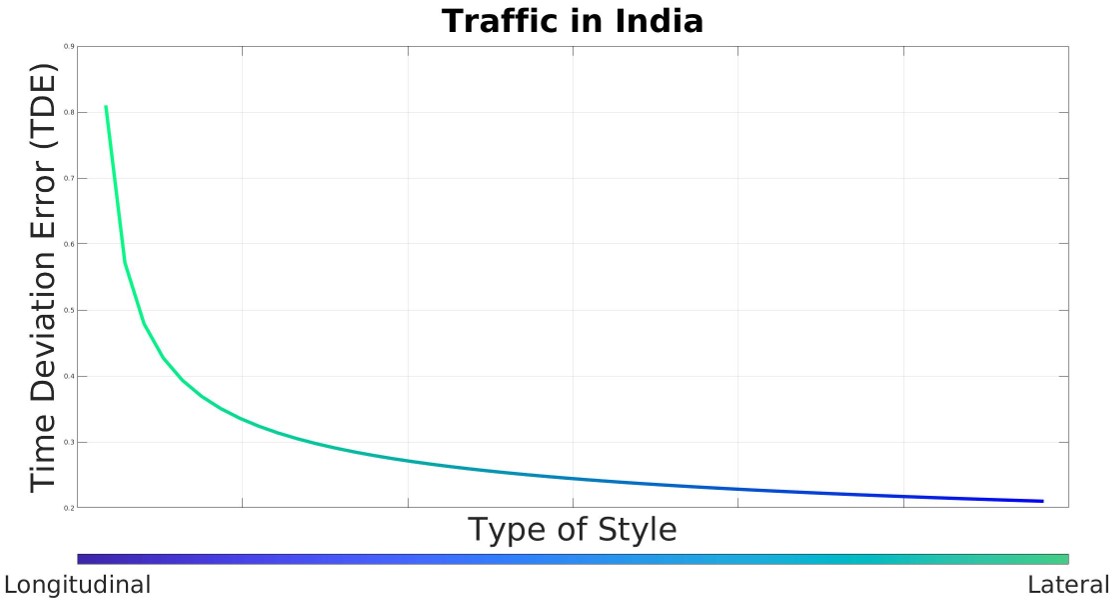}
\caption{Longitudinal styles (higher TDE) versus lateral styles (lower TDE)}
\label{fig: INDcurve}
\end{subfigure}
\caption{\textbf{Inverse Relationship:} We observe an inverse correlation between the TDE and the type of style (longitudinal vs. lateral). The ``type of style'' on the x-axis is a continuous variable. For example, a vehicle may simultaneously overspeed \textit{and} perform a lane-change. The ``type'' of this style would lie somewhere along the middle of the x-axis. (\textit{Top row}) The U.S. and Singapore are countries with low traffic density and lack of heterogeneity, and so drivers are more likely to perform longitudinal driving styles (Section~\ref{sec: culture_analysis}). Therefore StylePredict can model longitudinal styles more accurately (lower TDE) than lateral styles (higher TDE). (\textit{Bottom row}) Conversely, India and China are countries with higher traffic density and heterogeneity, and so drivers are less likely to perform longitudinal driving styles and more likely to perform lateral styles (Section~\ref{sec: culture_analysis}). Therefore, StylePredict can model lateral styles more accurately (lower TDE) than longitudinal styles (higher TDE).  } 
\label{fig: inverse_curve}
\end{figure*}

\subsection{Results using TDE} 
\label{subsec: real_world_traffic_analysis}

In Table~\ref{tab: accuracy}, we report the average TDE in seconds(s) in different geographical regions and cultures for the following driving styles: Overspeeding (OS), Overtaking (OT), Sudden Lane-Changes (SLC), and Weaving (W). 
The traffic conditions differ significantly due to the varying cultural norms in different countries including Singapore, the United States (U.S.), China, and India. For instance, the traffic is more regulated in the U.S. than in Asian countries such as India or China where vehicles do not conform to standard rules such as lane-driving. Such differences contribute to different driving behaviors. Our quantitative results in Table~\ref{tab: accuracy} and qualitative results in Figure~\ref{fig: qualitative} show that our driver behavior modeling algorithm is not affected by cultural norms. Across all cultures, the average TDE is less than $1$ second for every specific style. Aggressive vehicles are still associated with high centrality values while conservative vehicles remain associated with low centrality values.

In Figure~\ref{fig: qualitative}, we show traffic recorded in Singapore \textit{(top row)}, the U.S. \textit{(second row)}, China \textit{(third row)}, and India \textit{(bottom row)}. In each scenario, the first three columns depict the trajectory of a vehicle executing a specific style between some time interval. The last column shows the corresponding centrality plot. The shaded colored regions overlaid on the graphs in Figures~\ref{fig: sg4} and ~\ref{fig: argo4} are color heat maps that correspond to $\mathcal{P}(T)$ (line $8$, Algorithm~\ref{alg: ex}). The orange dashed line indicates the mean time frame, $\EX[T]$, whereas the blue dashed line indicates $t_\textrm{SLE}$. The main result can be observed by noting the negligible distance between the two dashed lines, \textit{i.e.} the TDE.

In the first row (corresponding to traffic in Singapore), for instance, our approach accurately predicts a maximum likelihood of a sudden lane-change by the white sedan at around the $75^\textrm{th}$ frame (blue dashed line, Figure~\ref{fig: sg4}), with an average TDE of $0.88$ seconds. And similarly in the second row (corresponding to traffic in the U.S.), we precisely predict the maximum likelihood of overspeeding by the vehicle denoted by the red dot at around the $30^\textrm{th}$ frame with a TDE of $0.25$ seconds. Note that in both cases the TDE (the distance between the blue and the orange dashed vertical lines) is almost negligible. 

\section{Analysis of Behavior Modeling in Different Cultures}
\label{sec: culture_analysis}

From the results of the experiments performed in the previous section, we draw several novel conclusions highlighting the relationship between driver behavior and traffic environments in the USA, China, India, and Singapore. Specifically, we observe that the traffic density and heterogeneity of a region influences the driving behaviors of road-agents. We summarize our conclusions as follows:


    

\begin{conc}
Drivers in regions with \textit{lower} traffic density and heterogeneity (U.S.A/Singapore) are more likely to perform \textit{longitudinal} driving styles (Figures~\ref{fig: UScurve} and~\ref{fig: SGcurve}). 
\end{conc}

A lower traffic density logically implies larger spaces in which vehicles can navigate, thereby theoretically increasing the chances of overspeeding and underspeeding. This phenomenon is reflected in the TDE measurement. The degree centrality should ideally be computed after observing an agent for some number of time-steps, due to the recursive nature of Definition~\ref{eq: degree}. Long-term observation is easier in sparse traffic and therefore, the TDE for overspeeding is lower for sparser populated countries like the U.S.($0.25$s) and Singapore($0.54$s), as shown in Table~\ref{tab: accuracy}.

\begin{conc}
Drivers in regions with \textit{higher} traffic density and heterogeneity (India/China) are more likely to perform \textit{lateral} driving styles (Figures~\ref{fig: CHcurve} and~\ref{fig: INDcurve}). 
\end{conc}

Conversely,  a higher traffic density implies smaller spaces in which vehicles can navigate, thereby minimizing the likelihood of overspeeding. In addition, countries like India and China are often composed of different road-agents including cars, buses, trucks, pedestrians, three-wheelers and two-wheelers. Two-wheelers and three-wheelers are known to be more likely to perform lateral driving styles~\cite{india-lane}. Therefore, StylePredict achieves the lowest TDE for lateral behaviors in India ($0.06$s for weaving, $0.19$s for sudden lane-changes, and $0.38$s for overtaking) and China ($0.23$s for weaving, $0.39$s for sudden lane-changes, and $0.44$s for overtaking).
    
Due to the influence of heterogeneity on lateral driving styles, the distribution of specific styles executed by vehicles is more uniform in India and China and skewed towards longitudinal styles in the U.S. In Figure~\ref{fig: culture2}, we show that over $83\%$ of aggressive maneuvers in homogeneous U.S. traffic were classified as overspeeding while the remaining $17\%$ were classified as sudden lane-changing. On the other hand, specific styles in traffic in India, which is relatively more heterogeneous than traffic in U.S., were observed to be evenly distributed between $27\%$ sudden lane-changing, $27\%$ overtaking, $27\%$ overspeeding, and $18\%$ weaving.

Based on these conclusions, we observe an inverse correlation between the TDE and the type of style (Figure~\ref{fig: inverse_curve}). For example, traffic in the U.S. and Singapore is relatively sparse and homogeneous. Therefore, by means of the above conclusions, we observe a lower TDE for longitudinal styles and higher TDE for lateral styles. On the other hand, we observe the opposite trend in Asian countries like India and China where traffic is dense and heterogeneous. In such environments, we observe a higher TDE for longitudinal styles and lower TDE for lateral styles.

\section{Conclusions, Limitations, and Future Work}
\label{sec: conclusion}
We present a new Machine Theory of Mind approach that uses the idea of vertex centrality from computational graph theory to explicitly and exactly model the behavior of human drivers in realtime traffic using only the trajectories of the vehicles in the global coordinate frame. Our approach is noise-invariant, can be integrated into any realtime autonomous driving system, and can work in any geographic region. 

We also study the behaviors of drivers in different regions with varying traffic density and heterogeneity and observe an inverse relationship between the longitudinal and lateral driving styles in a specific region. While our approach currently limits itself to modeling driving styles, it can be extended to decision-making and planning, which is a topic of future work.

\section*{Funding Acknowledgement}

This work was supported in part by ARO Grants W911NF1910069 and W911NF1910315, Semiconductor Research Corporation (SRC), and Intel.

\section*{Availability of Data}

The simulator used to generate the simulation results is publicly available~\cite{leurent2019social}. The datasets corresponding to the traffic in the U.S.~\cite{Argoverse}, India~\cite{chandra2019traphic}, and China\cite{wang2019apolloscape} are available publicly, while the dataset corresponding to the traffic in Singapore is private.

\bibliography{pnas-sample}
\clearpage
\section*{S1: Proof to Theorem~\ref{thm: noiseless}}

In Section~\ref{subsec: polynomial_regression}, our algorithm assumes perfect sensor measurements of the global coordinates of all vehicles. However, in real-world systems, even state-of-the-art methods for vehicle localization incur some error in measurements. We consider the case when the raw sensor data is corrupted by some noise $\epsilon$. Without loss of generality, we prove robustness to noise for the degree centrality and the analysis can be extended to other centrality functions. The discrete-valued centrality vector for the $i^\textrm{th}$ agent is given by $\zeta^i \in \mathbb{R}^{T\times 1}$. So $\zeta^1[2]$ corresponds to the degree centrality value of the $1^\textrm{st}$ agent at $t=2$. 

In Section~\ref{subsec: polynomial_regression}, we show that a noiseless estimator may be obtained by solving an ordinary least squares (OLS) system given by Equation~\ref{eq: noiseless_OLS}.  However, in the presence of noise $\epsilon$, the OLS system described in Equation~\ref{eq: noiseless_OLS} is modified as shown below:

\begin{equation}
    \begin{split}
        &M \tilde\beta = \tilde\zeta^i  \\
        &\tilde\beta = \inv{(M^\top M)}M^\top \tilde\zeta^i
    \end{split}
    \label{eq: noisy_OLS}
\end{equation}

\noindent where $\tilde \zeta^i = \zeta^i + \epsilon$. Then we can prove the following,
\begin{theorem}
$\Vts{\tilde \beta - \beta} =  \bigO{\epsilon}$.
\end{theorem}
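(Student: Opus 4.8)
The plan is to exploit the fact that the regression coefficients depend \emph{linearly} on the data, so that the estimation error is an explicit linear image of the noise, and then to bound the operator norm of that linear map by a constant determined solely by the (fixed) sampling times.

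First I would subtract the two normal-equation solutions. Since $\beta = \inv{(M^\top M)}M^\top \zeta^i$ and $\tilde\beta = \inv{(M^\top M)}M^\top \tilde\zeta^i$ with $\tilde\zeta^i = \zeta^i + \epsilon$, linearity gives
\begin{equation}
\tilde\beta - \beta = \inv{(M^\top M)}M^\top\epsilon = M^{+}\epsilon ,
\end{equation}
where $M^{+}$ denotes the Moore--Penrose pseudoinverse of the Vandermonde matrix $M$. Taking the Euclidean norm and using submultiplicativity,
\begin{equation}
\Vts{\tilde\beta - \beta} \;\le\; \Vts{M^{+}}_2\,\Vts{\epsilon}.
\end{equation}
It then remains to check that $\Vts{M^{+}}_2$ is a finite constant that does not depend on $\epsilon$. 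Because the time stamps $t_1 < \cdots < t_T$ are distinct and the polynomial degree satisfies $d+1 = 3 \le T$, the columns $1,\,t,\,t^2$ sampled at these nodes are linearly independent; hence $M$ has full column rank, $M^\top M$ is invertible (indeed positive definite), and $\Vts{M^{+}}_2 = 1/\sigma_{\min}(M)$ with $\sigma_{\min}(M) > 0$ the smallest singular value of $M$. Since $M$ is determined only by the acquisition schedule and not by the noise, we obtain $\Vts{\tilde\beta-\beta} \le c\,\Vts{\epsilon}$ with $c = 1/\sigma_{\min}(M)$, i.e. $\bigO{\epsilon}$ once $\bigO{\epsilon}$ is read as $\bigO{\Vts{\epsilon}}$, the magnitude of the corrupting noise.

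The only step beyond bookkeeping is certifying that the constant $c$ is harmless. Formally $\sigma_{\min}(M)>0$ is all that the statement needs, and distinctness of the nodes already guarantees it; but real Vandermonde matrices can be ill-conditioned for large $T$, so a careful version of the claim should either fix the low degree $d=2$ on a normalized time grid (making $c$ an absolute constant) or simply keep $c = 1/\sigma_{\min}(M)$ explicit inside the $\bigO{\cdot}$. I would close by remarking that the extension asserted after the theorem --- to the closeness centrality --- is immediate: the argument never uses any property of $\zeta^i$ other than that it is the vector being regressed through the same $M$, so it applies verbatim. If the regularized least-squares variant of Algorithm~\ref{alg: main} is used, one simply replaces $M^{+}$ by $\inv{(M^\top M + \lambda I)}M^\top$ throughout; its spectral norm is again a fixed constant and the conclusion is unchanged.
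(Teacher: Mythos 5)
Your proposal is correct and shares the paper's central algebraic step---subtracting the two normal-equation solutions to obtain $\tilde\beta - \beta = \inv{(M^\top M)}M^\top\epsilon$---but the two arguments diverge on how they handle the constant hidden inside $\bigO{\epsilon}$. You bound the error by $\Vts{M^{+}}_2\,\Vts{\epsilon} = \Vts{\epsilon}/\sigma_{\min}(M)$ and observe that, since $M$ is determined by the sampling schedule and has full column rank (distinct nodes, $d+1\le T$), this constant is finite and independent of $\epsilon$; that literally establishes the stated asymptotic. The paper instead dwells on the fact that this constant is governed by the condition number $\kappa$ of the Vandermonde matrix, which grows exponentially with $T$, states the unregularized bound only as $\bigO{\kappa\epsilon}$, and then pivots to a Tikhonov-regularized estimator $\tilde\beta = \beta + \inv{(M^\top M + \Gamma^\top\Gamma)}M^\top\epsilon$ whose inverted singular values $\sigma_i/(\sigma_i^2+\alpha^2)$ are uniformly bounded---in other words, the paper's proof really establishes the claim for the regularized system and treats regularization as essential rather than optional. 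You anticipated exactly this tension in your closing remarks (ill-conditioning for large $T$; the regularized map again having fixed spectral norm), so nothing is missing from your argument: yours is the cleaner proof of the literal asymptotic statement, while the paper's is better read as a demonstration that the implied constant can be made benign in practice. The one reading convention worth making explicit, as you do, is that $\bigO{\epsilon}$ must mean $\bigO{\Vts{\epsilon}}$, since $\epsilon$ is a vector of length $T$.
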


\begin{proof}
$M$ is $T\times(d+1)$ Vandermonde matrix where $d \ll T$ is the degree of the resulting centrality polynomial. Vandermonde matrices are known to be ill-conditioned with high condition number $\kappa = \frac{\sigma_\textrm{max} \Bstrutfrac}{\sigma_\textrm{min}}$ that increases exponentially with time $T$. From the noisy system given by Equation~\ref{eq: noisy_OLS}, we have,

\begin{equation}
    \begin{split}
        \tilde\beta &= \inv{(M^\top M)} M^\top\tilde\zeta^i \\
        \tilde\beta &= \inv{(M^\top M)} M^\top(\zeta^i + \epsilon) \\
        \tilde\beta &= \beta + \inv{(M^\top M)}M^\top\epsilon \\
    \end{split}
    \label{eq: beta_betatilde_relationship}
\end{equation}

\noindent From Equation~\ref{eq: beta_betatilde_relationship}, $\Vts{\tilde \beta - \beta} = \Vts{\inv{(M^\top M)}M\epsilon}$ which can be shown to be approximately in the order $ \bigO{\kappa\epsilon}$. Therefore, the error between the true solution $\beta$ and the estimated solution in the presence of noise $\tilde \beta$, depends on the condition number $\kappa$ of the matrix $M$. A higher value of $\kappa$ implies that the trailing singular values of $M^\top M$, denoted by $\Sigma = \{ \sigma_1^2, \sigma_2^2, \ldots, \sigma_d^2 \}$ have very small magnitudes. When inverting the matrix $M^\top M$, as in Equation~\ref{eq: beta_betatilde_relationship}, the singular values are inverted (by taking the reciprocal) and are represented by $\inv{\Sigma} = \{ \inv{(\sigma_1^2)}, \inv{(\sigma_2^2)}, \ldots, \inv{(\sigma_d^2)} \}$. After the inversion, the trailing singular values now have large magnitudes, $\Vts{\inv{(\sigma_i^2)}} \gg 1$. When multiplied by $\epsilon$, these large inverted singular values amplify the error, resulting in a large value of $\Vts{\tilde \beta - \beta}$. In Figure~\ref{fig: condition_number}, it can be seen by the red curve that under general conditions, $\kappa$ increases exponentially for even small matrices (we fix $d=2$ and increase $T$ from $0$ to $20$). 

However, there are many techniques that bound the condition number of a matrix by regularizing its singular values. In our application, we use the well-known Tikhonov regularization~\cite{calvetti2003tikhonov}. Under this regularization, after the inversion operation is applied on $M^\top M$, the magnitude of the resulting inverted singular values are constrained by adding a parameter $\alpha$. The modified inverted singular values can be expressed as $\frac{\sigma_i \Bstrutfrac}{\Tstrutfrac \sigma_i^2 + \alpha^2}$. The addition of the $\alpha^2$ in the denominator keeps the overall magnitude of the inverted singular value from ``blowing up''. The choice of the parameter $\alpha$ is important and a detailed analysis of computing $\alpha$ is given in the supplementary material. In our approach, as $M$ is fixed for every $T$, we need only search for the optimal $\alpha$ once for every $T$.

The new estimator is therefore obtained by regularizing Equation~\ref{eq: noisy_OLS} by adding a multiple of the Identity matrix, $\Gamma = \alpha I$ to $M^\top M$,

\[ \tilde\beta = \beta + \inv{(M^\top M + \Gamma^\top \Gamma)}M^\top\epsilon \]

\noindent The effect of this regularization on the condition number of $M$ can be seen by the blue curve in Figure~\ref{fig: condition_number}. 

\end{proof}
\begin{figure}[t]
    \centering
    \includegraphics[width = \columnwidth]{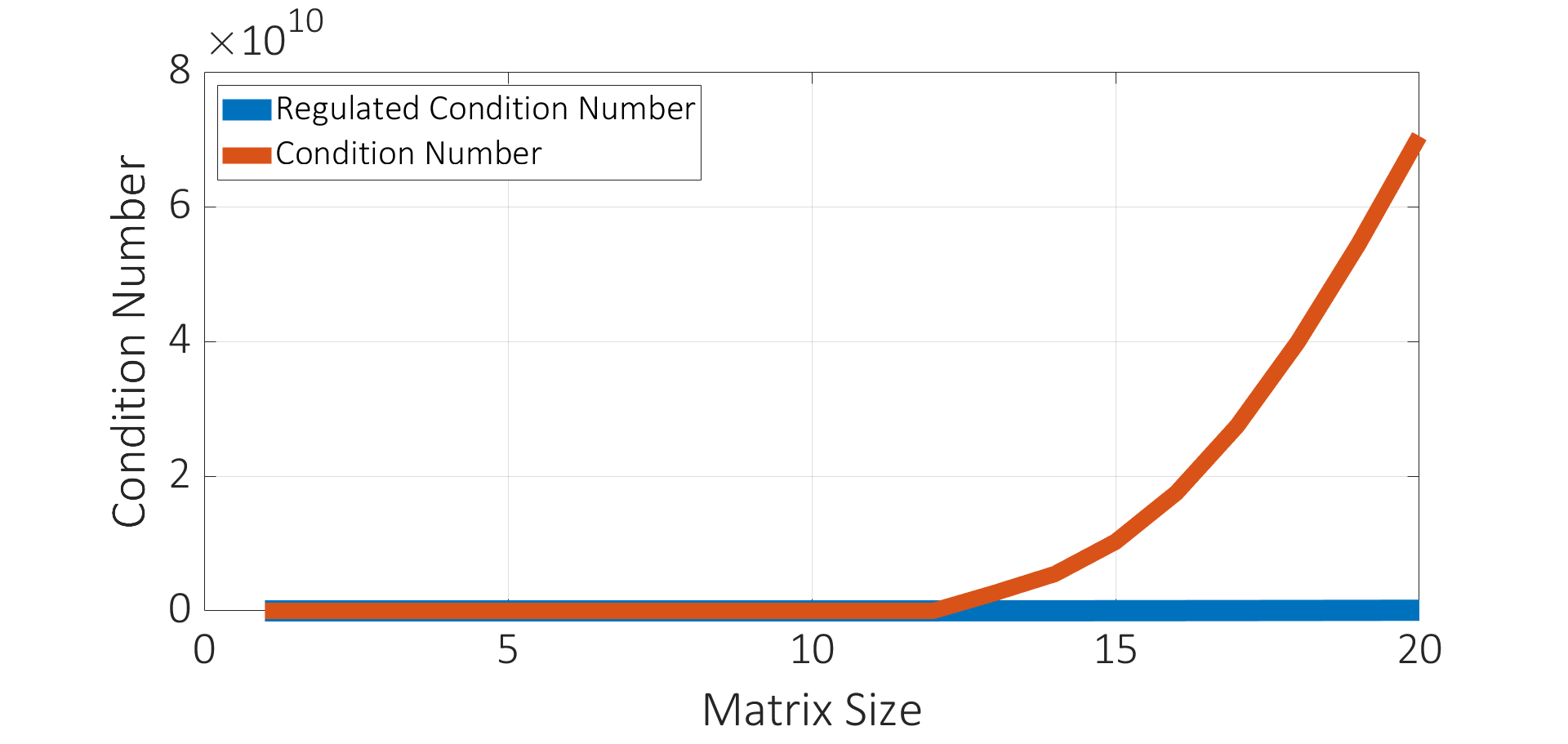}
    \caption{\textbf{Robustness to Noise:} We show that by regularizing the noisy OLS system given by Equation~\ref{eq: noisy_OLS}, we can reduce the original condition number (red curve) while at the same time upper bounding the reduced condition number (blue curve) by $\delta \longrightarrow 0$. The reduced condition number helps stabilize the noisy estimator $\tilde \beta$. }
    \label{fig: condition_number}
\end{figure}
\section*{S2: Simulator Parameters}

The linear acceleration model is based on the Intelligent Driver Model (IDM)~\cite{treiber2000congested} and is computed via the following kinematic equation,
\begin{equation}
    \dot v_{\alpha} = a\begin{bmatrix}1 - (\frac{v_{\alpha}}{v_0^{\alpha}})^4 - (\frac{s^*(v_{\alpha}, \Delta v_{\alpha})}{s_{\alpha}})^2\end{bmatrix}
    \label{eq: IDM_acc}
\end{equation}

\noindent Here, the linear acceleration, $\dot v_{\alpha}$, is a function of the velocity $v_{\alpha}$, the net distance gap $s_{\alpha}$ and the velocity difference $\Delta v_{\alpha}$ between the ego-vehicle and the vehicle in front. Equation~\ref{eq: IDM_acc} is a combination of the acceleration on a free road $\dot v_{free} = a[1 - (v/v_0)^{4}]$ (\textit{i.e.} no obstacles) and the braking deceleration, $-a(s^*(v_\alpha,\Delta v_\alpha)/s_\alpha)^2$ (\textit{i.e.} when the ego-vehicle comes in close proximity to the vehicle in front). The deceleration term depends on the ratio of the desired minimum gap ($s^*(v_\alpha,\Delta v_\alpha)$) and the actual gap ($s_{\alpha}$), where $s^* (v_\alpha,\Delta v_\alpha)= s_0 + vT + \frac{v\Delta v}{2\sqrt{ab}}$. $s_0$ is the minimum distance in congested traffic, $vT$ is the distance while following the leading vehicle at a constant safety time gap $T$, and $a,b$ correspond to the comfortable maximum acceleration and comfortable maximum deceleration, respectively.

The lane changing behavior is based on the MOBIL~\cite{kesting2007general} model. According to this model, there are two key parameters when considering a lane-change:
\begin{enumerate}
    \item \textit{Safety Criterion}: This condition checks if, after a lane-change to a target lane, the ego-vehicle has enough room to accelerate. Formally, we check if the deceleration of the successor $a_\textrm{target}$ in the target lane exceeds a pre-defined safe limit $b_{safe}$:
    \begin{equation*}
        a_\textrm{target} \geq -b_{safe}
    \end{equation*}
    
    \item \textit{Incentive Criterion}: This criterion determines the total advantage to the ego-vehicle after the lane-change, measured in terms of total acceleration gain or loss. It is computed with the formula,
    
    \begin{equation*}
    \tilde{a}_\textrm{ego} - a_\textrm{ego} + p(\tilde{a}_n - a_n + \tilde{a}_o - a_o) > \Delta a_{th}
    \end{equation*}
    where $\tilde{a}_\textrm{ego} - a_\textrm{ego}$ represents the acceleration gain that the ego-vehicle would receive after to the lane change. The second term denotes the total acceleration gain/loss of the immediate neighbours (the new follower in the target, $a_n$, and the original follower in the current lane, $a_o$) weighted with the politeness factor, $p$. By adjusting $p$ the intent of the drivers can be changed from purely egoistic ($p=0$) to more altruistic ($p=1$). We refer the reader to~\cite{kesting2007general} for further details.
\end{enumerate}

\noindent The lane change is executed if both the safety criterion is satisfied, \textit{and} the total acceleration gain is more than the defined minimum acceleration gain, $\Delta a_{th}$.

\begin{table}[h]
\caption{We show the simulation parameters that define the conservative and aggressive vehicle classes.}
\centering
\resizebox{\columnwidth}{!}{%
\begin{tabular}{cccc} 
\toprule
Model & Parameter \Tstrut & Conservative \Bstrut &   Aggressive \\
\hline
\multirow{4}{*}{IDM}& Time gap ( $T$) \Tstrut & 1.5s      & 1.2s \\
 &Min distance ($s_0$) & 5.0 $m$ & 2.5 $m$ \\
&Max comfort acc. ($a$)     & 3.0 $m/s^2$ & 6.0 $m/s^2$\\
&Max comfort dec. ($b$) & 6.0 $m/s^2$               &  9.0 $m/s^2$ \\
\midrule
\multirow{3}{*}{MOBIL}& Politeness ($p$) & 0.5     & 0\\
& Min acc gain ($\Delta a_{th}$) & 0.2 $m/s^2$ & 0 $m/s^2$ \\
& Safe acc limit ($b_{safe}$) & 3.0 $m/s^2$ & 9.0 $m/s^2$\\
\bottomrule
\end{tabular}
}
\label{tab: parameters}
\vspace{-10pt}
\end{table}

Additionally, the desired velocity $v_0$ is set to $25$ meters per second and $40$ meters per second for the conservative and aggressive vehicle classes, respectively. Finally, the desired velocities for the conservative vehicles were uniformly distributed with a variation of {$\pm$10\%} to increase the heterogeneity in the simulation environment.

\end{document}